\documentclass[11pt]{article}

\usepackage[final]{acl}

\usepackage{times}
\usepackage{latexsym}

\usepackage[T1]{fontenc}
\usepackage[utf8]{inputenc}

\usepackage{microtype}

\usepackage{inconsolata}

\usepackage{graphicx}

\usepackage{amsmath,amsfonts,bm}

\def\eqref#1{equation~\ref{#1}}
\def\1{\bm{1}}

\DeclareMathAlphabet{\mathsfit}{\encodingdefault}{\sfdefault}{m}{sl}
\SetMathAlphabet{\mathsfit}{bold}{\encodingdefault}{\sfdefault}{bx}{n}

\usepackage[dvipsnames]{xcolor}
\definecolor{oi-blue}{HTML}{0072B2}
\definecolor{oi-orange}{HTML}{E69F00}
\definecolor{oi-green}{HTML}{009E73}
\definecolor{oi-red}{HTML}{D55E00}
\definecolor{oi-purple}{HTML}{CC79A7}
\definecolor{code-bg}{HTML}{FBFBFB}
\usepackage[most]{tcolorbox}
\usepackage{listings}
\usepackage{cleveref}
\usepackage{url}
\usepackage{algorithm} 
\usepackage{algpseudocode}
\usepackage{amsmath,amssymb}
\usepackage{booktabs}
\usepackage{amsthm}
\usepackage{caption}
\usepackage{enumitem}
\newtheorem{theorem}{Theorem}

\lstdefinestyle{papersafe}{
  captionpos=b,
  language=Python,
  basicstyle=\ttfamily\small,
  numbers=none,
  backgroundcolor=\color{code-bg},
  frame=single,
  rulecolor=\color{black},
  breaklines=True,
  breakatwhitespace=true,
  showstringspaces=false,
  keywordstyle=\bfseries\color{oi-blue},       % keywords
  commentstyle=\color{oi-green!60!black}, % comments
  stringstyle=\color{oi-red!85!black},         % strings
  identifierstyle=\color{black},               % names
  emph={np,torch,nn,plt,fit,predict},          % highlight common APIs
  emphstyle=\color{oi-purple},
  ndkeywordstyle=\color{oi-orange},            % (if you use morekeywords)
  aboveskip=2pt, belowskip=2pt, framesep=2pt, columns=fullflexible,
  prebreak=\mbox{\textcolor{gray}{$\hookleftarrow$}},
  postbreak=\mbox{\textcolor{gray}{$\hookrightarrow$}\space},
}
\title{BayesFlow: A Probability Inference Framework for Meta-Agent Assisted Workflow Generation}

\author{
Bo Yuan\textsuperscript{1,3}, Yun Zhou\textsuperscript{2}, 
{\bf Zhichao Xu\textsuperscript{2}}\\ {\bf Kiran Ramnath\textsuperscript{2}}, {\bf Aosong Feng\textsuperscript{2}}, {\bf Balasubramaniam Srinivasan\textsuperscript{2,3}} \\
\\
\textsuperscript{1}Georgia Institute of Technology \\
\textsuperscript{2}Amazon Web Services AI Lab \\
\textsuperscript{3}Correspondence to 
byuan48@gatech.edu, srbalasu@amazon.com
}

\begin{document}

\maketitle

\begin{abstract}
% As agentic tasks based on large language models (LLM) backbones become more sophisticated and require multistep executions with diverse tools, manual workflow design becomes prohibitively time consuming and error prone, motivating the need for \textbf{automated workflow generation}. Automatic workflow generation is the process of automatically synthesizing sequences of LLM calls, tool invocations, and post-processing steps for complex end-to-end tasks. Most prior methods cast this task as an optimization problem with limited theoretical grounding. We instead formulate it as Bayesian inference over a posterior distribution of workflows and introduce a general sampling framework, \textbf{Bayesian Workflow Generation (BWG)}, with rigorous theoretical analysis. We instantiate BWG with a new algorithm, \textbf{BayesFlow}, which focuses on exploration on high-reward regions while preserving convergence guarantees to the target distribution. Across six benchmark datasets, BayesFlow improves accuracy by up to 9\% over SOTA workflow generation baselines and by up to 65\% over zero-shot prompting.

% As agentic tasks based on large language models (LLM) backbones become more sophisticated and require multistep executions with diverse tools, manual workflow design becomes prohibitively time consuming and error prone, motivating the need for \textbf{automated workflow generation}.
Automatic workflow generation is the process of automatically synthesizing sequences of LLM calls, tool invocations, and post-processing steps for complex end-to-end tasks. Most prior methods cast this task as an optimization problem with limited theoretical grounding. We propose to cast workflow generation as Bayesian inference over a posterior distribution on workflows, and introduce \textbf{Bayesian Workflow Generation (BWG)}, a sampling framework that builds workflows step-by-step using parallel look-ahead rollouts for importance weighting and a sequential in-loop refiner for pool-wide improvements. We prove that, without the refiner, the weighted empirical distribution converges to the target posterior. We instantiate BWG as \textbf{BayesFlow}, a training-free algorithm for workflow construction. Across six benchmark datasets, BayesFlow improves accuracy by up to 9 percentage points over SOTA workflow generation baselines and by up to 65 percentage points over zero-shot prompting, establishing BWG as a principled upgrade to search-based workflow design. 

% \yun{please pick/modify one if needed: demonstrating the practical gains of Bayesian workflow inference, establishing BWG as a principled alternative to search-based workflow design}.
\end{abstract}

\section{Introduction}
Large Language Models (LLMs) have demonstrated remarkable generality, often solving tasks with a single carefully engineered prompt~\citep{wei2022chain, wang2022self, shinn2023reflexion, asai2024self}. However, as tasks become increasingly complex and demand multi-step reasoning, tool usage and memory, there has been a growing shift toward agentic systems, where multiple LLM agents collaborate through structured workflows, consistently achieving significant performance gains over simple prompt engineering.
~\citep{du2023improving,zhao2023competeai,liang2023encouraging,jiang2023llm,madaan2023self}. 

However, designing and refining workflows using frameworks such as AutoGen~\citep{wu2024autogen}, CAMEL~\citep{li2023camel}, and MetaGPT~\citep{hong2023metagpt} remains a labor-intensive and expertise-driven process, often involving extensive trial and error. 
This manual bottleneck limits the scalability of LLM systems to new tasks, each demanding bespoke solutions which hinders the adaptability of existing solutions to evolving task requirements.
~\citep{tang2023verifai, qiao2024benchmarking, cemri2025multi}. Consequently, \textbf{automatic workflow design} has emerged as a crucial research challenge to advance the capabilities and generality of LLM-based systems~\citep{zhuge2024gptswarm, zhang2024aflow, hu2024automated, li2024autoflow}.

% \bs{This para can be shrinked massively imo - let me know if thats OK - can work on it} \Bo{Sure. Please do it. Actually we need to shrink it. We are already reach the 8 page limit.}
Most prior methods formulate automatic workflow design as optimization problems, maximizing validation performance via search heuristics such as Monte Carlo tree search~\citep{zhang2024aflow}, linear heuristic search~\citep{hu2024automated}, or evolutionary strategies~\citep{li2025agentswift, shang2024agentsquare} via a meta optimizer LLM. However, these approaches typically lack rigorous theoretical foundations and yield a single high scoring solution with limited diversity. In contrast, our \textbf{sampling-based} Bayesian formulation provides principled guarantees and naturally produces a diverse set of high quality workflows via posterior sampling, an effect observed in controllable text generation~\citep{qin2022cold}, adversarial attacks~\citep{guo2024cold}, and machine translation~\citep{faria2024quest}.

% Our framework defines a broad design space within which existing methods correspond to special settings. 
In this work, we formulate workflow generation as a posterior sampling problem in structured \textbf{ code} representations~\citep{hu2024automated}, where each workflow is composed of modular code chunks corresponding to steps. We adopt a Bayesian sampling perspective~\citep{doucet2000sequential}, treating the meta optimizer LLM's internal knowledge as a prior distribution over plausible workflows, while incorporating external feedback signals (e.g. task-specific rewards or correctness) via an energy-based mechanism~\citep{du2019implicit}.
Energy-based models (EBMs) model unnormalized probability distributions and take the form
$\exp\!\big(R(s_{1:T})\big)$, where \(s_{1:T}\) denotes a workflow with \(T\) steps and \(R\) is a predefined reward function.

More precisely, our objective is to sample a complete workflow \(s_{1:T}\) from the unnormalized posterior.
\begin{equation}\label{eq:target}
    q(s_{1:T} | s_0) \propto p(s_{1:T}|s_0)\,\exp\!\big(R(s_{1:T})\big)
\end{equation}
where \(p(s_{1:T}|s_0)\) is a prior induced by the meta optimizer LLM $p$ and a query prompt $s_0$.
% This formulation naturally extends to multi-objective sampling. 
% EBMs support compositionality: When each reward term encodes a distinct objective, such as latency, cost, or effectiveness, combining the terms yields a combined reward whose maximizer jointly satisfies these objectives~\citep{mnih2005learning,haarnoja2017reinforcement}. Throughout, we focus on the validation performance and let \(R(s_{1:T})\) denote the validation set score. 
For notational simplicity, we will use \(s\) and \(s_{1:T}\) interchangeably and omit the dependency on $s_0$ when there is no confusion. It is worth noting that our posterior target distribution has rich connections to both reinforcement learning and inference scaling laws. Detailed analysis of connections is provided in Appendix~\ref{subsection:optimal}.

% \bs{this para below add way too much detail - unnecessary in the intro  - can be moved to appendix?}\Bo{Yes. I can work on it. I think it's one motivation why we consider Sampling, so I will summarize it here and move the details to appendix} \bs{Feel this is too much detail at this stage - probably move to Section 4, just as a para before Sec 4.1}

In this work, we introduce \textbf{Bayesian Workflow Generation (BWG)}, formalizing automatic workflow construction as Bayesian posterior sampling over workflows, replacing trajectory level code synthesis with fine-grained, step level generation via \emph{parallel look-ahead rollouts} for importance weighting and \emph{sequential in-loop refinement} for candidate improvement in Section~\ref{section:BWG}. Furthermore, we prove that the weighted empirical distribution converges asymptotically to the target distribution under mild assumptions in Theorem~\ref{theorem:1}. We instantiate BWG as \textbf{BayesFlow} in Section~\ref{section:BayesFlow} which demonstrates consistent gains across various benchmarks and model families, as shown in Section~\ref{section:Experiments}.

% We further instantiate BWG with a practical algorithm, \textbf{BayesFlow}. Experiments show that BayesFlow delivers consistently strong gains across reasoning benchmarks, multi-hop question answering, and challenging scientific datasets, on both closed-source and open-source LLMs.

In summary, our contributions are three-fold:
(1) Motivated by connections to reinforcement learning and inference scaling laws, we formulate workflow generation as Bayesian inference rather than optimization, which naturally promotes diversity in generated workflows. We also provide rigorous theoretical guarantees for this Bayesian formulation.
(2) We propose Bayesian Workflow Generation (BWG), a sampling-based framework that produces high-quality workflows via parallel look-ahead rollouts and sequential in-loop refinements. The parallel look-ahead step estimates the value of incomplete workflows without additional training or reliance on stronger closed-source models, while the in-loop refiner unifies prior methods under BWG and further improves workflow quality.
(3) We instantiate BWG with \textbf{BayesFlow} and demonstrate consistent performance gains in six datasets on both closed-source and open-source LLMs.
% \bs{Shall we add a line to say that as an implication of our contributions - there is a massive performance boost over SOTA?}\Bo{Added one sentence}
Consequently, BayesFlow improves accuracy by up to 9\% over SOTA workflow generation baselines on the math reasoning dataset and yields an average gain of up to $4.6\%$ across all benchmarks.
% \zx{1/ the introduction section is quite dense and technical heavy. Some of the materials should be covered in the algorithm section but instead is detailed in the introduction section, which creates a barrier for readers who only want to skim through the paper without going deep into algorithmic implementations. Can you better organize the introduction section for lighter reading experience? 2/ the current paper still have >0.5 page space left, would it be better to insert a related works section after Introduction? We want to highlight our differences between prior works, as well.} \Bo{Thank you for the suggestion! 1/ Yes I agree the introduction is very heavy. We will adjust it once all other components are done. We still need to insert figures, add one more group of experiments and add related work section. Actually we already have a very dense related work section and are trying to summarize it to two or three paragraphs}
% \bs{added a smaller related work section, and expanded on it in the appendix}

\section{Related Work}
Agentic workflows represent a fundamental paradigm in LLM applications, consisting of structured sequences of LLM invocations designed to solve complex tasks through predefined processes \citep{sirui2024meta, jia2024mobile}. 
Unlike autonomous agents that make dynamic decisions based on environmental feedback, agentic workflows operate through predetermined static logical sequences (or with conditionals) that can be systematically designed and refined \citep{zhuge2023mindstorms, wang2023voyager}. 
% General workflows focus on universal problem solving methodologies applicable across diverse domains\citep{wei2022COT, wang2022COTSC,madaan2023self,wang2023multipersona} whereas domain specific workflows are tailored to particular areas of application \citep{tal2024code, dong2024agent,hong2024data, yu2024hai, bo2024nlsql, yi2024gen,zhong2024achieving,zhou2024language}. 
While effective, these manually designed workflows require substantial human expertise, limiting their scalability to new domains.

% \subsection{Automatic Agentic Workflow Generation}
Recent research automates workflow optimization through prompt optimization \citep{fernando2023promptbreeder, wang2023promptagent, yang2023large, yuksekgonul2024textgrad}, hyperparameter tuning \citep{saad2024archon}, and workflow structure optimization \citep{li2024autoflow, hu2024automated, zhuge2024gptswarm}. 
% {\bf Training Free Approaches:} 
Training free methods leverage pretrained LLMs for iterative improvement without parameter updates. Component level methods like DSPy~\citep{khattab2023dspy} and TextGrad~\citep{yuksekgonul2024textgrad} optimize prompting strategies within fixed structures. 
Structure generation methods including ADAS \citep{hu2024automated}, GPTSwarm \citep{zhuge2024gptswarm}, and AFLOW \citep{zhang2024aflow} explore complete workflow architectures, though they struggle to leverage historical data and often perform no better than random sampling.
The fundamental challenge lies in effectively navigating the vast search space of possible workflow configurations while maintaining computational efficiency and ensuring the generated workflows generalize across different tasks and domains. 
% \yun{pls polish: Moreover, there is limited theoretical discussion on the property of generated workflows.} 
Moreover, there is limited theoretical discussion on the property of generated workflows. An extended discussion can be found in \Cref{app:extended_related_work}.

\section{Preliminaries}

\paragraph{Workflow generation as step-level code generation} We view automatic workflow generation as Bayesian posterior sampling in an energy-based model, and frame it concretely as a code generation task executed by an optimizer LLM. At the finest granularity, the optimizer LLM generates tokens, yet throughout the rest of this paper, we operate at step level: tokens are grouped into meaningful code chunks \(s_t\). To achieve so, we explicitly require the meta optimizer LLM to generate comments in the form of "Step <n>:" before every major block. See example workflows in Appendix~\ref{appendix:case_study} for each generated comment.
Although a workflow is originally represented as a directed graph~\citep{khattab2023dspy}, encoding it as code~\citep{zhang2024aflow} imposes a convenient linear order greatly simplifies generation processes.

\paragraph{Math formulation} A workflow with $T$ steps is a discrete trajectory ${s}_{1:T}=(s_1,\dots,s_T)$ drawn from the autoregressive prior of the meta optimizer LLM, $p({s}_{1:T}) = p(s_1)\prod_{t=2}^{T} p\!\left(s_t \mid {s}_{1:t-1}\right)$.
We use a single reward that measures validation accuracy
$R({s}_{1:T}) = \mathrm{Acc}_{\mathrm{val}}({s}_{1:T}) \in [0,1]$.
The posterior distribution over the workflows is then
$\frac{1}{Z}\, p( {s}_{1:T}) \, \exp\!\big[R( {s}_{1:T})\,\big]$,
where
$ Z =\sum_{ {s}_{1:T}} p( {s}_{1:T}) \exp\!\big[ R( {s}_{1:T})\,\big] $
is the normalization constant. For notational simplicity in mathematical expressions, we adopt a fixed horizon $T$. 
Workflows may terminate at different step counts; shorter sequences can be padded with empty strings, so that every trajectory admits a length representation $T$ without loss of generality.

\paragraph{Challenges} It is worth noting that sampling in this setting is nontrivial. The formulation exploits the autoregressive factorization of the LLM prior. Each conditional $p(s_t | s_{1:t-1})$ is produced step by step while operating under a regime \emph{terminal-reward} in which the reward $R(s_{1:T})$ is observed only after the entire workflow has been generated. The absence of intermediate rewards poses a significant challenge in guiding prefix decisions. The posterior step distribution can be written as
$$ 
p(s_t \mid s_{1:t-1}) \,
\underbrace{\sum_{s_{t+1:T}} p(s_{t+1:T}\!\mid s_{1:t}) \,\exp\!\big[ R(s_{1:T})\big]}_{\text{look-ahead value}}.
$$
This look-ahead term couples the current choice with an intractable sum over all completions, making the exact sampling challenging.

One way to solve this problem is to draw $N$ complete trajectories $s_{1:T}^{(i)} \sim p(s_{1:T})$ and then resample following unnormalized weights using only the reward:
$
w_i = \exp\!\big[ R(s_{1:T}^{(i)})\big], 
\bar w_i = \frac{w_i}{\sum_{j=1}^N w_j}.
$
This method is simple but can suffer from \emph{weight degeneracy}~\citep{johansen2009tutorial}, where a few
$\bar w_i$ dominate and most samples contribute negligibly. A typical strategy to mitigate it is step-level resampling. This motivates us to propose parallel look-ahead rollouts in Section~\ref{section:parallel}.

\section{Bayesian Workflow Generation}
\label{section:BWG}
To address the challenges in the terminal reward setting, we introduce \textbf{Bayesian Workflow Generation (BWG)}, a general Bayesian sampling framework shifting workflow construction from monolithic code synthesis to fine-grained step-level generation through \textbf{parallel loop-ahead rollouts} and \textbf{sequential in-loop refinements}. A class of inference-only methods treats workflow design as generating code at the trajectory level~\citep{zhang2024aflow,hu2024automated}. Step-wise generation builds workflows incrementally, but generally relies on strong closed source to score partial workflows~\citep{li2025agentswift} or an auxiliary learned value model to guide the search~\citep{shang2024agentsquare}. In contrast, BWG assigns importance weights via parallel look-ahead rollouts that anticipate the downstream energy of each partial workflow. This look-ahead mechanism removes the reliance on external proprietary models and avoids expensive process-model training, while ensuring that the weighted empirical distribution converges asymptotically to the target distribution in Theorem~\ref{theorem:1}.

In addition to parallel sampling, BWG integrates a sequential in-loop refiner that improves complete rollouts: Given a set of existing full workflows, the refiner is expected to produce additional higher-quality workflows. This module coincides with the core optimization step in previous work - implemented by MCTS in~\citep{zhang2024aflow,li2025agentswift} or by linear heuristic search in~\citep{hu2024automated}. This perspective casts BWG as a unifying framework that subsumes existing workflow generation methods as special cases.

\begin{figure*}
  \centering
  \includegraphics[width=\textwidth]{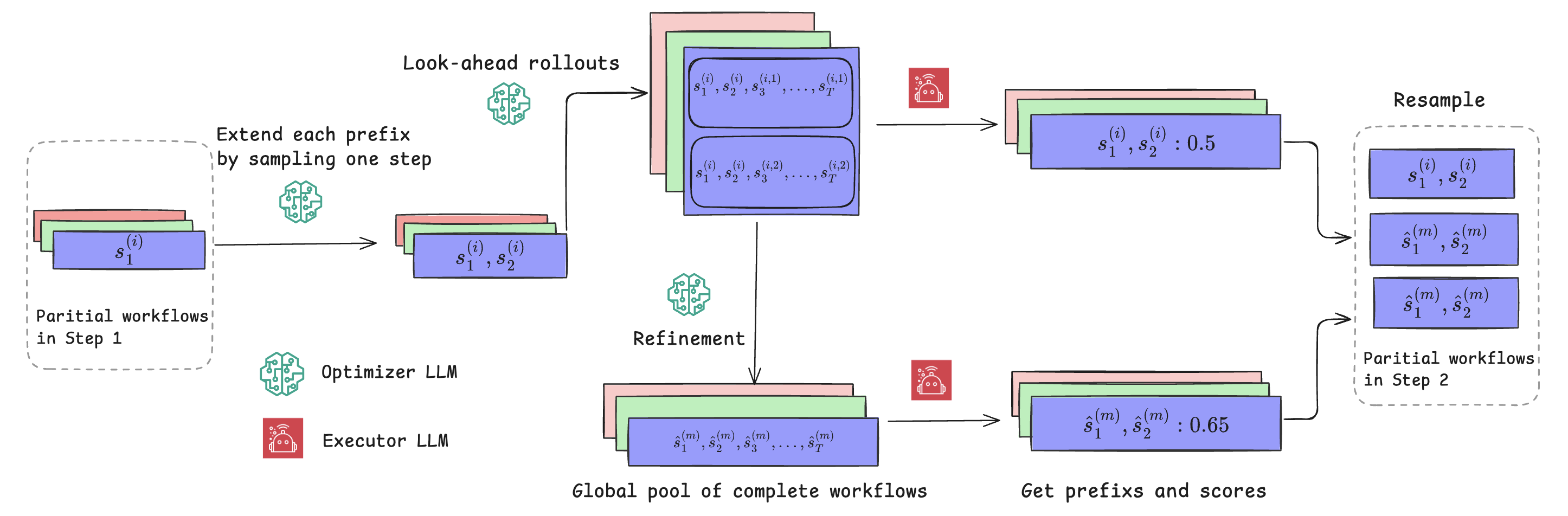}
  \caption{This diagram depicts Algorithm~\ref{alg:bayesflow_step} in a setting with \(N=3\) candidate partial workflows, \(K=2\) look-ahead rollouts per candidate, and \(M=3\) refinement attempts. The optimizer LLM is invoked to (i) sample next-step expansions, (ii) generate parallel look-ahead rollouts to estimate downstream value, and (iii) refine the current pool; the executor LLM is invoked to score complete workflows. The figure shows the transition of partial workflows from Step 1 to Step 2.}
  \label{fig:diagram}
\end{figure*}

\subsection{Parallel look-ahead rollouts}\label{section:parallel}

Unlike training a process reward model~\citep{li2025agentswift}, which is usually unstable~\citep{xu2025hybrid} and sensitive to distribution shift~\citep{levine2023baseline}, we adopt the idea from Sequential Monte Carlo (SMC)~\citep{doucet2001introduction, loula2025syntactic} and build the trajectory in the following round. 
\begin{enumerate}[noitemsep]
  \item \emph{Extend each prefix.} For each partial workflow $s_{1:t-1}^{(i)}$, sample one step from the prior $s_t^{(i)} \sim p(s_t | s_{1:t-1}^{(i)})$, forming the new prefix $s_{1:t}^{(i)}$.
  \item \emph{Look-ahead scoring.}\label{item:score} For each new prefix $s_{1:t}^{(i)}$, draw $K$ stochastic completions $\tilde s_{t+1:T}^{(i,k)} \sim p(s_{t+1:T} | s_{1:t}^{(i)})$ and set the importance weight $w^{(i)} = \frac{1}{K}\sum_{k=1}^K \exp\!\big[ R\big(s_{1:t}^{(i)}, \tilde s_{t+1:T}^{(i,k)}\big)\big]$.
  \item \emph{Normalize and Resample.}\label{item:resample} Compute normalized weights $\bar w^{(i)} = w^{(i)} \big/ \sum_{j=1}^N w^{(j)}$. Resample the prefixes $s_{1:t}^{(i)}$ according to $\bar w^{(i)}$; specifically, draw the $N$ indices $a_1,\dots,a_N$ with $\Pr(a_n=i)= \bar w^{(i)}$, thus replicating higher-weight prefixes and pruning lower-weight ones.
\end{enumerate}

In each round, we progress from $N$ partial workflows of length $t$ to $t{+}1$. Iterating this round until $T$ yields an approximation of the target posterior $p(s_{1:T}) \exp[R(s_{1:T})]$, mitigating weight degeneracy relative to pure prior reweighting. This mechanism leads to a tractable estimate of the intractable marginal and yields informative weights before resampling. We emphasize that the look-ahead rollouts are parallel across all partial workflows, incurring only negligible additional wall-clock time.

% BayesFlow preserves the particle--resample skeleton of Sequential Monte Carlo while adding two mechanisms tailored to \emph{terminal-reward} settings:  
% (i) a full \emph{look-ahead rollout} that estimates the downstream energy of each explorer prefix, and  
% (ii) a sequential \emph{Refiner} that upgrades these complete rollouts into higher-quality complete trajectories.  

% \paragraph{Look-ahead rollout} In terminal–reward settings the ideal intermediate target at step \(t\) is the
% marginal posterior
% \(
%   \pi_t(\mathbf{s}_{1:t})
%   \propto
%   \sum_{\mathbf{s}_{t+1:T}}
%   p(\mathbf{s}_{1:T})\,
%   \exp\bigl[E(\mathbf{s}_{1:T})\bigr],
% \)
% but evaluating this sum is infeasible: the reward is known only after the
% entire trajectory finishes, so every prefix must be marginalised over
% \emph{all} possible continuations.  Falling back to the autoregressive prior
% as the proposal leaves the importance weights extremely skewed and SMC quickly
% degenerates.  BayesFlow circumvents this obstacle with its first mechanism—a
% Monte-Carlo \emph{look-ahead rollout} that samples a handful of futures for
% each prefix, producing a tractable estimate of the intractable marginal and
% yielding informative weights before resampling.

\subsection{Sequential in-loop refinements} 

Even with informative look-ahead weights, exploration rollouts are drawn solely from the prior, limiting reach to workflows far from familiar patterns. More importantly, once a low-quality prefix is chosen, the parallel look-ahead has no mechanism to repair that early mistake: the prefix is fixed, and the trajectory tends to a poor terminal reward. Repeated resampling can partially mitigate this by reallocating mass to successful workflows, but a sample-only scheme remains highly reactive and underuses the self-reflection capabilities of the meta optimizer LLM~\citep{renze2024self}.

Therefore, after computing the look-ahead scoring (Stage~\ref{item:score}) in the parallel look-ahead mechanism, we augment the candidate pool \emph{before} the final normalization and resampling (Stage~\ref{item:resample}) via sequential in-loop refinements as follows: 

\begin{enumerate}[noitemsep]
  \item \emph{Global refinement operator.} For each particle $i$ with prefix $s_{1:t}^{(i)}$, let its $K$ rollouts be $\mathcal{C}_t^{(i)}=\{\tilde s_{t+1:T}^{(i,k)}\}_{k=1}^K$ and its look-ahead weight be $w^{(i)}$. Define the pool that includes all complete rollouts of $NK$ and their weights as $\mathcal{P}_t$. We introduce a general refinement operator $\mathcal{G}$ that generates $M$ complete workflows based on the entire pool, i.e., $\{\hat s_{1:T}^{(m)}\}_{m=1}^M \sim \mathcal{G}(\cdot | \mathcal{P}_t)$. Note that refinement operates at the workflow level: It may revise existing prefixes instead of merely extending them, yielding globally improved trajectories, and thereby enabling the refinement of earlier low-quality steps.

  \item \emph{Score proposals.} For each workflow, set $\hat w^{(m)}=\exp\!\big[R(\hat s_{1:T}^{(m)})\big]$.
  \item \emph{Normalize and resample.} Normalize all the weights in the $\{w^{(i)}\}_{i=1}^N \cup \{\hat w^{(m)}\}_{m=1}^M$ and resample $N$ prefixes from this enlarged pool.
\end{enumerate}

% \paragraph{Algorithm} Let \(K_1\) denote the number of \emph{explorer} particles drawn in parallel, and let \(K_2\) be the number of additional particles produced by the Refiner; the total particle budget is therefore \(K = K_1 + K_2\).  
% The resulting update step is detailed in Algorithm~\ref{alg:bayesflow_step}. Algorithm~\ref{alg:bayesflow_full} repeatedly invokes the step update of
% Algorithm~\ref{alg:bayesflow_step} until a step horizon
% $T$ is reached.
% Theoretical results suggests discarding earlier particles and
% propagating only the resampled pool, because—under infinite budget—the latest particles form the best Monte-Carlo approximation to the target distribution.  
% \emph{Practically}, however, the particle budget is small and every call
% to the LLM is expensive; we therefore log \emph{all} complete workflows
% generated along the way. This historical cache supplies two useful
% post-processing options::
% \emph{(i)} pick the single highest-reward trajectory
% $\arg\max_{s\in\mathcal{C}}E(s)$; or
% \emph{(ii)} execute several top-ranked workflows in parallel and apply
% a self-consistency vote over their outputs.  To ensure a fair
% comparison with baseline methods—each of which relies on a single
% workflow—we adopt strategy~(i) in all reported experiments.
% Nevertheless, strategy~(ii) incurs \emph{no} additional wall-clock time
% when workflows are run concurrently, and has the potential to further boost accuracy in practical deployments.

The refinement operator is a core, commonly used component in existing trajectory-level methods. We adopt a sequential generate-and-insert schedule: starting from the current pool, we produce one complete workflow at a time, immediately score it, and insert it back into the pool before generating the next. Our contribution here is a conceptual framework in which various methods can serve as a refiner. Concretely, previous work takes a pool of complete workflows with terminal scores and applies a search operator, e.g., linear heuristic expansion~\citep{hu2024automated}, Monte Carlo Tree Search (MCTS)~\citep{zhang2024aflow}, or evolutionary mutation~\citep{shang2024agentsquare}. In our implementation, the refiner is MCTS largely adopted from AFlow~\citep{zhang2024aflow} due to its superior performance across baselines in our experiments. 

Building on parallel look-ahead rollouts and sequential global refinement, we arrive at the Algorithm~\ref{alg:bayesflow_step}. See Figure~\ref{fig:diagram} for one concrete example. Its sole purpose in iteration $t$ is to advance the prefix population from length $t$ to $t{+}1$ by (i) exploiting parallel estimates of downstream value and (ii) injecting pool-wide refinements before a final resampling. This yields a transition operator preserving diversity, prioritizing promising prefixes, and preparing the pool for the next step.

\subsection{Bayesian Workflow Generation as a general framework}

BWG subsumes trajectory level workflow generation methods~\citep{zhang2024aflow,hu2024automated, shang2024agentsquare} as special cases. If the parallel look-ahead mechanism is disabled, BWG degenerates to pure refinement of complete workflows. In contrast, the removal of the refiner. i.e., $M=0$, yields pure parallel exploration akin to SMC. In the general case , BWG strictly extends these extremes by inserting step-level stochastic exploration before refinement, exploiting informative prefixes that purely trajectory-level schemes cannot access.

\begin{algorithm}[t]
\caption{\textsc{StepUpdate}}
\label{alg:bayesflow_step}
\begin{algorithmic}[1]
\Require Prefix pool $W_{t-1} := \{s_{1:t-1}^{(i)}\}_{i=1}^{N}$; rollout count $K$; number of refined workflows $M$

\State \textbf{Extend prefixes:} For each $i=1,\dots,N$, sample one step
$s_t^{(i)} \sim p(s_t | s_{1:t-1}^{(i)})$ and set $s_{1:t}^{(i)} \gets (s_{1:t-1}^{(i)}, s_t^{(i)})$.
\State \textbf{Look-ahead scoring:} For each new prefix $s_{1:t}^{(i)}$, draw $K$ completions
$\{\tilde s_{t+1:T}^{(i,k)}\}_{k=1}^{K} \sim p(s_{t+1:T} | s_{1:t}^{(i)})$ and compute the score $w^{(i)}= \frac{1}{K} \sum_{k=1}^{K} \exp\!\big[ R\big(s_{1:t}^{(i)}, \tilde s_{t+1:T}^{(i,k)}\big)\big]$.
\State \textbf{Global refinement:} Using the entire pool
$\mathcal{P}_t$, generate $M$ new complete workflows
$\{\hat s_{1:T}^{(m)}\}_{m=1}^{M} \sim \mathcal{G}(\cdot | \mathcal{P}_t)$.
\State \textbf{Project and score proposals:} For each $m$, denote the first $t$ steps as $\hat s_{1:t}^{(m)} $ and compute
$\hat w^{(m)} \;=\; \exp\!\big[ R\big(\hat s_{1:T}^{(m)}\big)\big]$.
\State \textbf{Augment pool:} Form $\mathcal{P}_t^{\mathrm{aug}}=\{(s_{1:t}^{(i)}, w^{(i)})\}_{i=1}^N \cup \{(\hat s_{1:t}^{(m)}, \hat w^{(m)})\}_{m=1}^{M}$.
\State \textbf{Resample prefixes:} Draw $N$ prefixes from $\mathcal{P}_t^{\mathrm{aug}}$ with probabilities proportional to their scores
$\{w^{(i)}\} \cup \{\hat w^{(m)}\}$;

\State \Return the resulting set as $W_t = \{\hat{s}_{1:t}^{(j)}\}_{j=1}^{N}$ and all generated complete workflows $ \mathcal{P}_t^{complete} = \{s_{1:t}^{(i)}, \tilde s_{t+1:T}^{(i,k)}\} \cup \{\hat s_{1:T}^{(m)}\}$
\end{algorithmic}
\end{algorithm}

\begin{algorithm}[t]
\caption{BayesFlow}
\label{alg:bayesflow_full}
\begin{algorithmic}[1]
\Require Rollouts count $K$; number of partial workflows in each round $N$, number of refined workflows $M$, maximum step count $T$
\State Initialize \emph{empty} prefix pool $W_0\gets\varnothing$ 
\State $t\gets 0$
\While{$t\le T$}
    \State Run $\textsc{StepUpdate}\bigl(W_t, K, M\bigr)$, and get $W_{t+1}$ and $ \mathcal{P}_{t+1}^{complete}$
    \State $t\gets t+1$
\EndWhile
\State $\mathcal{C}\gets \bigcup_{t=1}^T \mathcal{P}_{t}^{complete}$
\State \Return $\mathcal{C},\ \arg\max_{s\in \mathcal{C}} R(s)$
\end{algorithmic}
\end{algorithm}

\subsection{Theoretical analysis}

In this section, we present the convergence theorem that provides the theoretical analysis of Algorithm~\ref{alg:bayesflow_step}. To the best of our knowledge, this work provides the first theoretical analysis in the field of automatic workflow generation. The theorem shows that our parallel look-ahead design preserves asymptotic convergence to the original target distribution. It also implies that sampling from the target rather than the prior improves the expected reward of generated workflows. See Appendix~\ref{Appendix:proof} for a quick proof. It is worth noting that with the refinement mechanism enabled, the asymptotic convergence guarantee no longer holds. However, our ablation study demonstrates that refinement contributes to empirical gains.

\begin{theorem}\label{theorem:1}
Consider Algorithm~\ref{alg:bayesflow_step} without the refiner (i.e., $M{=}0$) for $T$ steps, and let
$\widehat\pi_{N,T}$ be the empirical distribution of the final $N$ complete workflows.
Then, as $N\!\to\!\infty$\footnote{Standard assumptions for SMC are taken to hold; see \citet{doucet2000sequential}.},
the empirical measure $\widehat\pi_{N,T}$ converges in probability to the target
distribution $q$ in~\eqref{eq:target}.
Moreover, $\mathbb{E}_{q}[R(s)] \ge \mathbb{E}_{p}[R(s)]$, where $R$ is the reward function and $p$ is the prior distribution of the meta  optimizer LLM.
\end{theorem}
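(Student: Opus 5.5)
Without the refiner, I will read Algorithm~\ref{alg:bayesflow_step} as a standard sequential importance resampling scheme. I will then show that the look-ahead weight $w^{(i)}$ is an unbiased estimate of the correct incremental weight, so that the whole procedure is a random-weight (pseudo-marginal) SMC sampler.

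\textbf{Intermediate targets.} I define
\[
\gamma_t(s_{1:t}) = p(s_{1:t})\,V_t(s_{1:t}), \qquad V_t(s_{1:t}) = \mathbb{E}_{p(s_{t+1:T}\mid s_{1:t})}\bigl[\exp R(s_{1:T})\bigr],
\]
with normalized versions $\pi_t = \gamma_t/Z$. Summing the autoregressive factorization gives $\sum_{s_t}p(s_t\mid s_{1:t-1})V_t(s_{1:t}) = V_{t-1}(s_{1:t-1})$, so every $\gamma_t$ shares the same normalizer $Z$, and $\pi_T = q$ from \eqref{eq:target}.

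\textbf{Weights.} The particles at step $t-1$ are resampled uniformly and are approximately distributed as $\pi_{t-1}$. Proposing from the prior $p(s_t\mid s_{1:t-1})$ then gives the exact incremental weight
\[
\frac{\gamma_t(s_{1:t})}{\gamma_{t-1}(s_{1:t-1})\,p(s_t\mid s_{1:t-1})} = \frac{V_t(s_{1:t})}{V_{t-1}(s_{1:t-1})}.
\]
The difficulty is that the algorithm computes $w^{(i)}$, an estimate of $V_t(s^{(i)}_{1:t})$ alone. The denominator $V_{t-1}$ is the weight the particle carried before resampling, so resampling proportional to it has already accounted for it. To make this precise, I will check by induction on $t$ that after step $t-1$ the resampled particles target $p(s_{1:t-1})\,\hat V_{t-1}$-proportional mass. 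The check uses conditional independence of the fresh rollouts. Because $\mathbb{E}[w^{(i)}\mid s^{(i)}_{1:t}] = V_t(s^{(i)}_{1:t})$ exactly (the $K$ rollouts are i.i.d.\ draws from the prior completion), the extended space that includes the rollout variables admits the target as its marginal. This is the standard pseudo-marginal / random-weight SMC argument.

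\textbf{Convergence.} I will invoke the standard SMC law of large numbers under the assumptions cited from \citet{doucet2000sequential}. The weights are bounded, since $w\in[1,e]$ because $R\in[0,1]$, so no extra integrability conditions are needed. For bounded test functions $f$, this gives $\widehat\pi_{N,T}(f)\to q(f)$ in probability as $N\to\infty$. At $t=T$ the look-ahead is trivial and $w=\exp R(s_{1:T})$, so the last resampling produces the correct final weighting. The main obstacle is exactly this weight bookkeeping across resampling steps: one must show that using $\hat V_t$ rather than the ratio $\hat V_t/\hat V_{t-1}$ is correct given multinomial resampling. If the direct induction proves messy, I would first try the extended-target construction, treating the rollouts at each step as auxiliary variables. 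That makes the algorithm an exact SIR on an augmented space whose marginal on $s_{1:t}$ is $\pi_t$, after which the standard theorem applies directly.

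\textbf{Reward inequality.} This is a covariance statement. Since $q(s) = p(s)e^{R(s)}/\mathbb{E}_p[e^{R}]$,
\[
\mathbb{E}_q[R] - \mathbb{E}_p[R] = \frac{\mathrm{Cov}_p\bigl(R, e^{R}\bigr)}{\mathbb{E}_p[e^R]}.
\]
The covariance is nonnegative because $R$ and $e^R$ are both nondecreasing functions of the same variable (Chebyshev's association inequality). Equivalently, $\mathbb{E}_p[(R(s)-R(s'))(e^{R(s)}-e^{R(s')})]\ge 0$ for independent $s,s'\sim p$, and symmetrizing gives the claim.
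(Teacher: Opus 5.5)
The step you flag as the main obstacle cannot be proved, because it is false. So the convergence half of your argument cannot be completed as planned.

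Your reason that $V_{t-1}$ ``has already been accounted for'' by resampling runs backwards. Multinomial resampling at step $t-1$ builds the factor $V_{t-1}$ into the law of the unweighted pool, so the incremental weight has to divide it out. Your own induction shows this:
\begin{itemize}
\item Suppose the resampled prefixes are asymptotically distributed as $\pi_{t-1}\propto p(s_{1:t-1})V_{t-1}(s_{1:t-1})$.
\item Extend with $p(s_t\mid s_{1:t-1})$ and resample proportionally to a conditionally unbiased estimate of $V_t(s_{1:t})$.
\item The resulting law is $\propto p(s_{1:t})V_{t-1}(s_{1:t-1})V_t(s_{1:t})$, not $\pi_t$.
\end{itemize}
Iterating, the $M{=}0$ algorithm as written converges to $\propto p(s_{1:T})\prod_{t=1}^{T-1}V_t(s_{1:t})\,e^{R(s_{1:T})}$ rather than to $q$.

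For a concrete case, take $T=2$, a uniform prior on $s_1\in\{a,b\}$, and $R=1$ if $s_1=a$, $R=0$ otherwise. Both resampling steps then use weights $e^{R}$. The limiting frequency of $a$ is therefore $e^2/(e^2+1)$, while $q(a)=e/(e+1)$.

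Your extended-target construction, with the $K$ rollouts of each step as auxiliary variables, is the right tool. What it certifies, however, is the ratio weight $\hat V_t(s^{(i)}_{1:t})/\hat V_{t-1}(s^{(i)}_{1:t-1})$. Here the denominator is the estimate computed for particle $i$'s ancestor at step $t-1$ and carried through resampling, with $\hat V_0\equiv 1$ and $\hat V_T=e^{R}$. With that modification the weights lie in $[e^{-1},e]$ and your argument goes through. With the algorithm's weight $\hat V_t$ alone, it is valid in general only for $T=1$.

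The paper takes the same random-weight SMC route. It declares $\mathbb{E}_{p(s_{t+1:T}\mid s_{1:t})}e^{R(s_{1:T})}$ to be ``the weight'' for the prior mutation kernel, notes that the look-ahead estimator is positive and unbiased, and invokes Theorem~3.5 of \citet{rohrbach2022convergence}. It never addresses the weight bookkeeping you raised, so it is open to the same objection. You were right to flag the issue, but resolving it requires changing the weights (or the claimed limit), not a cleverer induction.

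Your reward inequality is correct. You write $\mathbb{E}_q[R]-\mathbb{E}_p[R]=\mathrm{Cov}_p(R,e^R)/\mathbb{E}_p[e^R]\ge 0$. The paper instead argues that $\frac{d}{d\beta}\mathbb{E}_{q_\beta}[R]\ge 0$ (the derivative equals $\mathrm{Var}_{q_\beta}(R)$) along $q_\beta\propto p\,e^{\beta R}$. Yours is a direct one-line inequality; the paper's also gives monotonicity in the inverse temperature.
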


Theorem~\ref{theorem:2} formalizes an ``exploration without drift'' property of BayesFlow with refinement. We show that refinement increasing diversity and encouraging broader
exploration of the workflow space, yet it does so in a controlled manner.
The TV bound quantifies this control: if each refinement step perturbs the induced complete workflow distribution
by at most $\varepsilon_t$, then the refined prefix flow stays close to the baseline flow, with deviation growing
only additively across steps. In particular, under a uniform bound $\varepsilon$, we obtain
$\mathrm{TV}(\pi_T,q)\le (T-1)\varepsilon$, meaning that refinement can expand the support of sampled workflows
while remaining provably close to the target distribution. See Appendix~\ref{Appendix:proof_2} for proof.

\begin{theorem}\label{theorem:2}
Consider Algorithm~\ref{alg:bayesflow_step} with the refiner enabled (i.e., $M>0$) for $T$ steps.
Let $\pi_T$ denote the limiting distribution over complete workflows.
Then $\pi_T$ admits the mixture decomposition
$
\pi_T \;=\; \alpha\, q \;+\; (1-\alpha)\,\pi_{\mathrm{rest}}
\quad
$ for some $\alpha\in(0,1]$,
where $q$ is the target distribution, and
$\pi_{\mathrm{rest}}$ is a distribution supported on workflows that undergo refinement at least once.

Moreover, let $\mu_t^0$ and $\mu_t^1$ denote the distributions over prefixes at step $t$ for Algorithm~\ref{alg:bayesflow_step}
without and with the refiner, respectively.
Assume that at each step $t$, for any prefix law, the distributions over complete workflows
before and after refinement differ by at most $\varepsilon_t$ in total variation distance.
Then the prefix drift is bounded as
\[
\mathrm{TV}(\mu_t^1,\mu_t^0) \;\le\; \sum_{k=1}^{t-1}\varepsilon_k,
\qquad t=1,\ldots,T.
\]
In particular, if $\varepsilon_k\le \varepsilon$ for all $k$, then
$
\mathrm{TV}(\mu_t^1,\mu_t^0)\le (t-1)\varepsilon,
$ and hence
$\mathrm{TV}(\pi_T,q)\le (T-1)\varepsilon$.
\end{theorem}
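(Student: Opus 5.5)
We treat the two assertions of Theorem~\ref{theorem:2} separately and work throughout in the idealized limit $N\to\infty$. In that limit, by the same SMC law-of-large-numbers argument used for Theorem~\ref{theorem:1}, the resampled prefix pool at each step is described by a deterministic measure $\mu_t$ on prefixes, and one call of \textsc{StepUpdate} becomes a deterministic map $\Phi_t$ on such measures.

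For the mixture decomposition we split the enlarged pool $\mathcal{P}_t^{\mathrm{aug}}$ into two parts: the $N$ look-ahead particles with weights $w^{(i)}$, and the $M$ refined proposals with weights $\hat w^{(m)}$. Resampling is proportional to weight, so at step $t$ a resampled prefix comes from the look-ahead part with some limiting probability $\beta_t$. Since rewards lie in $[0,1]$, all weights lie in $[1,e]$, which gives $\beta_t\ge N/(N+Me)>0$.

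We then label each prefix lineage by whether it has passed through a refined proposal at least once. Conditional on never having been refined, a lineage has followed exactly the $M=0$ dynamics: prior extension, look-ahead weighting and resampling. By Theorem~\ref{theorem:1} its law converges to $q$. We set $\alpha=\prod_t\beta_t\in(0,1]$ and let $\pi_{\mathrm{rest}}$ be the conditional law of lineages refined at least once. This gives $\pi_T=\alpha q+(1-\alpha)\pi_{\mathrm{rest}}$. The delicate point is that the unrefined sub-population is normalized only among itself. We will argue that proportional resampling restricted to a subset of particles is still proportional resampling within that subset, so the unrefined lineages remain a valid SMC system targeting the same marginals.

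For the TV bound we compare $\mu_t^1=\Phi_t^1(\mu_{t-1}^1)$ with $\mu_t^0=\Phi_t^0(\mu_{t-1}^0)$ and insert the hybrid term $\Phi_t^0(\mu_{t-1}^1)$. The triangle inequality gives
\[
\mathrm{TV}(\mu_t^1,\mu_t^0)\le \mathrm{TV}\bigl(\Phi_t^1(\mu_{t-1}^1),\Phi_t^0(\mu_{t-1}^1)\bigr)+\mathrm{TV}\bigl(\Phi_t^0(\mu_{t-1}^1),\Phi_t^0(\mu_{t-1}^0)\bigr).
\]
The first term is at most $\varepsilon_{t-1}$ by the hypothesis, which holds for any prefix law. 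Projecting complete workflows onto their prefixes is a deterministic map and cannot increase TV, so the hypothesis passes from complete workflows to prefixes. The indexing has to be aligned so that step $t$ uses $\varepsilon_{t-1}$; at $t=1$ both chains start from the same empty pool and the drift is zero.

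The second term requires $\Phi_t^0$ to be nonexpansive in TV, and we expect this to be the main obstacle. The Markov extension step is a contraction. Reweighting by the look-ahead value $V_t(s_{1:t})=\mathbb{E}[e^{R}\mid s_{1:t}]$ followed by renormalization is not nonexpansive in general. Our first attempt is a coupling argument in the idealized limit: the step-$t$ reweighting by $V_t/V_{t-1}$ composed with the prior extension is exactly the optimal twisted kernel $q(s_t\mid s_{1:t-1})$, which is a Markov kernel and therefore a TV contraction. If this identification is accepted, with finite-$K$ noise vanishing as $K\to\infty$, then $\Phi_t^0$ acts as a Markov kernel and the second term is at most $\mathrm{TV}(\mu_{t-1}^1,\mu_{t-1}^0)$.

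Unrolling the recursion gives $\mathrm{TV}(\mu_t^1,\mu_t^0)\le\sum_{k=1}^{t-1}\varepsilon_k$, and with $\varepsilon_k\le\varepsilon$ this is $(t-1)\varepsilon$. At $t=T$ the prefixes are complete workflows, $\mu_T^1=\pi_T$, and $\mu_T^0=q$ by Theorem~\ref{theorem:1}, so $\mathrm{TV}(\pi_T,q)\le (T-1)\varepsilon$.
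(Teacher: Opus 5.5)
Your proposal follows the paper's proof essentially step for step: the mixture comes from splitting final workflows by whether their lineage was ever drawn from a refined candidate and applying total probability, and the drift bound uses the same hybrid $\Phi^0_t(\mu^1_{t-1})$, the triangle inequality, pushforward contraction under the prefix projection for the refinement term, and nonexpansiveness of the no-refiner update, unrolled into $\sum_{k=1}^{t-1}\varepsilon_k$. Where you differ, you are more careful than the paper, which simply asserts that total variation cannot increase under the same post-processing, but two caveats apply: your twisted-kernel justification presumes the incremental weight $V_t/V_{t-1}$ (the SMC reading under which Theorem~\ref{theorem:1} holds), whereas Algorithm~\ref{alg:bayesflow_step} as written resamples on an estimate of $V_t$, in which case the no-refiner update first reweights the incoming prefix law by $V_{t-1}$ and renormalizes, a map that can expand total variation; and zero drift at $t=1$ is a base case the paper also assumes silently, not a consequence of the common empty pool $W_0$, since the first \textsc{StepUpdate} already refines.
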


\section{BayesFlow}
\label{section:BayesFlow}
Building on Bayesian workflow generation, we instantiate \textbf{BayesFlow}, a flexible, inference-only method for automatic workflow generation. 

\paragraph{Design space}Inspired by the design space in \citet{nie2025weak}, BayesFlow avoids the predefined agentic modules used in previous work \citep{zhang2024aflow,hu2024automated,zhang2025multi} (e.g., \textsc{Ensemble}, \textsc{Revise}, \textsc{Programmer}), which can constrain the design space of the workflow. Instead, we specify only a lightweight workflow interface and expose generic helper primitives—LLM calls and code execution—leaving prompts, sampling hyperparameters, and control logic entirely to the meta optimizer, thereby encouraging innovation and adaptability. Concretely, we provide two helper functions: \texttt{chat\_completion}, which is a single LLM call given a system role, instructions, and a message history to return one or more candidate continuations; and \texttt{exec\_code}, which executes generated code against public unit tests and reports pass/fail outcomes with error diagnostics.  The \texttt{exec\_code} is only provided on coding datasets with public test sets. This minimal interface is model-agnostic, easy to extend, and decouples high-level workflow design from implementation details. See Appendix~\ref{appendix:helper} for detailed signatures. We also present the generated workflows in Appendix~\ref{appendix:case_study}.

\paragraph{Prior distribution} We instantiate the prior distribution $p$ of the meta optimizer LLM via a carefully designed prompt that instructs the model to produce workflow code in a fixed rule-based format. The prompt requires the agent to annotate each substantive step. If the resulting workflow fails to execute, we trigger a self-correction routine: the meta LLM is re-invoked with the same prompt augmented by the captured error log and asked to emit a corrected, runnable version. In this self-correction stage, we include the explicit prompt: \emph{“Do not use any try-except blocks. Fix the root cause rather than catching it.”} This mitigates reward hacking through exception handling and improves the success rate of self-correction.

% \paragraph{Reward function}
% The terminal reward $R(s)$ is, by default, the task accuracy measured on a held-out validation set, mirroring prior work in automatic workflow generation.  The formulation is, however, fully general: \(E(x)\) can be any scalar objective evaluated on the \emph{complete} workflow, including composite scores that trade off robustness to perturbations, API cost, latency constraints, or other application-specific metrics.  Multiple criteria are easily combined through the energy‐based view, simply by summing their individual energy terms.

\paragraph{Refiner implementation.}
Our refiner is inspired by the selection logic of MCTS in AFlow~\citep{zhang2024aflow} but uses a simpler pool-based procedure. 
From the current pool of complete workflows with associated validation rewards, we select the top-$C$ candidates ($C{=}3$ in all experiments), then sample one workflow with probabilities given by a softmax over rewards with temperature 0.1. We then pass the selected workflow to the meta optimizer LLM with the text gradient procedure of~\citet{yuksekgonul2024textgrad}. We design a prompt that requests one consequential edit while preserving the prescribed code format.
The edited workflow is inserted back into the pool, and the process repeats for the next refinement step.

\paragraph{BayesFlow algorithm}
We note that, in expectation, the mean reward for complete workflows does not decrease between steps, so the optimum population would be reached at the final step $T$. However, in practice, under finite sampling with a limited number of workflows per step, the highest-scoring workflow may appear earlier. Therefore, instead of restricting selection to the last step \(T\), we maintain a global pool of all candidates generated between steps and return as the final output the single workflow with the highest validation score observed throughout the run, as shown in Algorithm~\ref{alg:bayesflow_full}.

\section{Experiments}
\label{section:Experiments}
\subsection{Experiments setup}

\paragraph{Datasets and models}

We benchmark BayesFlow on six datasets across three domains, i.e., math reasoning: MATH~\citep{hendrycks2021measuring}, GSM8K~\citep{cobbe2021training}, question answering: DROP~\citep{dua2019drop}, HotpotQA~\citep{yang2018hotpotqa}, and expert knowledge: GPQA~\citep{rein2024gpqa}, MMLU-Pro~\citep{wang2024mmlu}. Workflows are drafted with Claude 3.5-Sonnet~\citep{anthropic2024claude35} (meta optimizer LLM) and executed with Claude 3.7-Sonnet~\citep{anthropic2025claude37} or
Qwen2.5-7B-Instruct~\citep{team2024qwen2}. All models are accessed via APIs. We set the temperature to 0 for the meta optimizer LLM. For MATH, we follow~\citep{hong2023metagpt} and select 617 level-5 problems spanning four categories: Combinatorics \& Probability, Number Theory, Pre-algebra, and Pre-calculus. For datasets with more than 1000 examples, we randomly sample 1000 instances. Then we adopt the standard 1:4 validation–test split used in prior work. All datasets are distributed with licenses that permit academic research use. 

\paragraph{Metrics}
For GSM8K and MATH we report the solve rate, the percentage of problems fully solved.  
For HotpotQA and DROP we use F1 score.   
For MMLU-Pro and GPQA, we measure accuracy, checking if the final selected option is correct.

\paragraph{Baselines} 

We evaluated six baselines in experiments. Three are prompt-based methods: (i) zero-shot prompts (IO), (ii) zero-shot chain-of-thought (CoT)~\citep{wei2022chain}, and (iii) CoT with self-consistency (CoT-SC)~\citep{wang2022self}, where each question is answered three times. The prompts are from the official repository of ADAS~\citep{hu2024automated}. We also compare against three automatic workflow generation baselines: ADAS, AFlow~\citep{zhang2024aflow} and MaAS~\citep{zhang2025multi}. For ADAS, we follow the original protocol with 30 iterations; for AFlow, we run 20 iterations and evaluate each generated workflow five times on the validation set; and for MaAS, we adopt the default hyperparameters from the official repository. We base our implementations on the official repositories where feasible. For datasets not originally supported, we extend the baselines by revising dataset-specific prompts while keeping the remaining components unchanged.

\subsection{Main results}

\begin{table*}[ht]
  \centering
  \renewcommand{\arraystretch}{0.95}
  \begin{tabular}{@{}lccccccc@{}}
    \toprule
    {Method} & {GSM8K} & {MATH} & {HotpotQA} & {DROP} & {MMLU-Pro} & {GPQA} & {Average} \\
    \midrule
    \multicolumn{8}{c}{\textit{Optimizer: Claude-3.5-Sonnet, Executor: Claude-3.7-Sonnet}} \\
    \midrule
    IO        & 90.0{\tiny$\pm$0.2} & 40.2{\tiny$\pm$0.2} & 57.4{\tiny$\pm$12.0} & 75.1{\tiny$\pm$0.1} & 57.8{\tiny$\pm$0.7} & 44.8{\tiny$\pm$0.5} & 60.9{\tiny$\pm$2.0} \\
    CoT       & 95.2{\tiny$\pm$0.2} & 43.2{\tiny$\pm$0.4} & 45.1{\tiny$\pm$17.9} & 81.5{\tiny$\pm$0.3} & 77.4{\tiny$\pm$1.4} & 65.6{\tiny$\pm$1.7} & 68.0{\tiny$\pm$3.0} \\
    CoT-SC    & 96.0{\tiny$\pm$0.2} & 45.0{\tiny$\pm$0.2} & 21.8{\tiny$\pm$0.0}  & 49.9{\tiny$\pm$0.5} & 24.3{\tiny$\pm$0.7} & \underline{65.8}{\tiny$\pm$1.2} & 50.5{\tiny$\pm$0.3} \\
    \midrule
    ADAS      & 96.0{\tiny$\pm$0.2} & 44.1{\tiny$\pm$2.4} & 73.3{\tiny$\pm$3.4}  & 81.2{\tiny$\pm$1.2} & 80.1{\tiny$\pm$1.1} & 64.0{\tiny$\pm$1.5} & 73.1{\tiny$\pm$0.8} \\
    MaAS      & \underline{96.4}{\tiny$\pm$0.2} & 41.3{\tiny$\pm$0.5} & \underline{76.3}{\tiny$\pm$0.6} & 84.2{\tiny$\pm$0.4} & \underline{82.0}{\tiny$\pm$0.6} & 55.2{\tiny$\pm$2.0} & 72.6{\tiny$\pm$0.4} \\
    AFlow     & \textbf{96.5}{\tiny$\pm$0.2} & \underline{60.1}{\tiny$\pm$1.5} & 63.7{\tiny$\pm$18.4}$^{\dagger}$ & \underline{89.2}{\tiny$\pm$0.3} & \textbf{82.3}{\tiny$\pm$0.7} & 65.3{\tiny$\pm$1.7} & \underline{76.2}{\tiny$\pm$3.1} \\
    BayesFlow (Ours) & 96.0{\tiny$\pm$0.3} & \textbf{69.4}{\tiny$\pm$4.1} & \textbf{77.5}{\tiny$\pm$0.7} & \textbf{90.8}{\tiny$\pm$1.5} & 81.8{\tiny$\pm$0.8} & \textbf{69.2}{\tiny$\pm$1.8} & \textbf{80.8}{\tiny$\pm$0.8} \\
    \midrule
    \multicolumn{8}{c}{\textit{Optimizer: Claude-3.5-Sonnet, Executor: Qwen 2.5-7B-Instruct}} \\
    \midrule
    IO        & 24.7{\tiny$\pm$0.2} & 16.6{\tiny$\pm$0.1} & 21.5{\tiny$\pm$0.1} & 27.7{\tiny$\pm$0.1} & 41.1{\tiny$\pm$0.1} & 33.1{\tiny$\pm$0.4} & 27.4{\tiny$\pm$0.1} \\
    CoT       & \textbf{90.6}{\tiny$\pm$0.3} & 17.6{\tiny$\pm$0.2} & 21.9{\tiny$\pm$0.4} & 32.2{\tiny$\pm$0.3} & 53.9{\tiny$\pm$0.4} & 34.4{\tiny$\pm$1.4} & 41.8{\tiny$\pm$0.3} \\
    CoT-SC    & 90.0{\tiny$\pm$0.3} & 15.6{\tiny$\pm$0.1} & 27.1{\tiny$\pm$0.7} & 12.3{\tiny$\pm$0.1} & 42.3{\tiny$\pm$1.5} & 34.0{\tiny$\pm$0.7} & 36.9{\tiny$\pm$0.3} \\
    \midrule
    ADAS      & 86.9{\tiny$\pm$0.5} & 21.6{\tiny$\pm$1.7} & \underline{68.3}{\tiny$\pm$0.8} & \underline{75.1}{\tiny$\pm$1.6} & 53.4{\tiny$\pm$1.0} & 34.1{\tiny$\pm$2.2} & 56.6{\tiny$\pm$0.6} \\
    MaAS      & 88.3{\tiny$\pm$0.1} & \underline{47.1}{\tiny$\pm$0.2} & 56.3{\tiny$\pm$0.1} & 48.7{\tiny$\pm$0.8} & \underline{54.6}{\tiny$\pm$0.2} & \underline{34.6}{\tiny$\pm$0.5} & 54.9{\tiny$\pm$0.2} \\
    AFlow     & \underline{90.4}{\tiny$\pm$1.1} & 47.0{\tiny$\pm$0.7} & 67.6{\tiny$\pm$0.7} & 69.9{\tiny$\pm$1.3} & 51.0{\tiny$\pm$3.6} & 33.5{\tiny$\pm$1.7} & \underline{59.9}{\tiny$\pm$0.7} \\
    BayesFlow (Ours) & \textbf{90.6}{\tiny$\pm$0.6} & \textbf{49.1}{\tiny$\pm$2.7} & \textbf{70.1}{\tiny$\pm$1.4} & \textbf{78.6}{\tiny$\pm$1.7} & \textbf{56.2}{\tiny$\pm$1.3} & \textbf{35.7}{\tiny$\pm$2.2} & \textbf{63.4}{\tiny$\pm$0.7} \\
    \bottomrule
  \end{tabular}
    \caption{Accuracy (mean~$\pm$~std.) of different methods across six datasets. We compare with both prompting baselines and agentic workflow methods. The best result for each dataset is in \textbf{bold}, and the second-best is \underline{underlined}. The high variance$^{\dagger}$ on AFlow for HotpotQA stems from occasional failures. }
  \label{tab:perf_transposed_final_reavg}
  %\zx{for some reasons I don't understand, this table is quite dense... can we improve its outlook}
\end{table*}

% \begin{table}[t]
%   \centering
%   \caption{Accuracy on the Claude-3.7-Sonnet executor.}
%   \label{tab:perf_claude}
%   % --- tighten columns/rows ---
%   \setlength{\tabcolsep}{3pt}        % default ~6pt
%   \renewcommand{\arraystretch}{0.92}  % default 1.0
%   \begin{tabular}{@{}lcccc@{}}        % @{} trims outer padding
%     \toprule
%     Dataset & BayesFlow & ADAS & MaAS & AFlow \\
%     \midrule
%     GSM8K      & 96.0{\tiny$\pm$0.3} & 96.0{\tiny$\pm$0.2} & 96.4{\tiny$\pm$0.2} & \textbf{96.5}{\tiny$\pm$0.2} \\
%     MATH       & \textbf{69.4}{\tiny$\pm$4.1} & 44.1{\tiny$\pm$2.4} & 41.3{\tiny$\pm$0.5} & 60.1{\tiny$\pm$1.5} \\
%     HotpotQA   & \textbf{77.5}{\tiny$\pm$0.7} & 73.3{\tiny$\pm$3.4} & 76.3{\tiny$\pm$0.6} & 63.7{\tiny$\pm$18.4}$^{\dagger}$ \\
%     DROP       & \textbf{90.8}{\tiny$\pm$1.5} & 81.2{\tiny$\pm$1.2} & 84.2{\tiny$\pm$0.4} & 89.2{\tiny$\pm$0.3} \\
%     MMLU-Pro   & 81.8{\tiny$\pm$0.8} & 80.1{\tiny$\pm$1.1} & 82.0{\tiny$\pm$0.6} & \textbf{82.3}{\tiny$\pm$0.7} \\
%     GPQA       & \textbf{69.2}{\tiny$\pm$1.8} & 64.0{\tiny$\pm$1.5} & 55.2{\tiny$\pm$2.0} & 65.3{\tiny$\pm$1.7} \\
%     \midrule
%     Average    & \textbf{80.8}{\tiny$\pm$1.5} & 73.1{\tiny$\pm$1.6} & 72.6{\tiny$\pm$0.7} & 76.2{\tiny$\pm$3.8} \\
%     \bottomrule
%   \end{tabular}
%     \vspace{2pt}
%     {\scriptsize $^{\dagger}$High variance stems from occasional failures}
% \end{table}

Each experiment was repeated three times with different random seeds and each selected workflow is evaluated on the test set five times (more hyperparameters shown in \Cref{Appendix:hyperparameters}). Table~\ref{tab:perf_transposed_final_reavg} reports the mean and standard deviation of accuracy. For the closed-source model, Claude 3.7-sonnet, BayesFlow achieves the highest average accuracy (80.8 \%) across the six benchmarks, outperforming the next‐best method (AFlow, 76.2 \%) by 4.6 percentage points.  
Its advantage is most pronounced on the challenging MATH and GPQA datasets, where it exceeds AFlow by 9.3 percentage points and 3.9 percentage points, respectively.
Standard deviations for \textbf{BayesFlow} are uniformly modest across the data sets, indicating a more stable behavior compared to other workflow baselines. For example, AFlow's high variance on HotpotQA is due to occasional failures to output effective workflows. BayesFlow also shows generalizability to the smaller-sized model, Qwen 2.5-7B-Instruct, achieving the top average accuracy ($63.4\%$), with a $3.5$ percentage points margin over AFlow ($59.9\%$) in the six tested datasets.

% In addition to the six datasets, we also test the performance of AFlow and BayesFlow on the coding dataset On claude 3.7-sonnet, MBPP~\citep{austin2021program}. Being different from the meta-propmt and helpful functions used in experiments on the six datasets. We also adjust the meta-prompt of the the otpmizer LLM by adding the helpful function \textsc{exec_code} which is used to test the writen code on public test sets. In the meta prompt we also emphare the exact output format to avoid syntax error. The implementation of \textsc{exec_code}  is largely adaopted from AFlow. The performance is 75.5 v.vs.  79.6 where the accuact stands fro pass@1. The final workflow obtained from Bayesflow contained six steps, as shown in 

In addition to the six main datasets, we evaluated AFlow and BayesFlow on the coding benchmark MBPP~\citep{austin2021program} using the Claude-3.7-Sonnet executor. Because code generation requires stricter interfaces, we modify the optimizer’s meta-prompt to (i) expose a helper function \texttt{exec\_code} that executes the function on public test cases and (ii) enforce an exact output format to minimize syntax errors. Notably, BayesFlow achieves pass@1 at $85.0\%$ in MBPP, which largely exceeds AFlow's $75.5\%$, further demonstrating the superior performance of BayesFlow\footnote{We had issues reproducing results for the other two baselines, thus omit MBPP from Table~\ref{tab:perf_transposed_final_reavg} now.}. 

The final MBPP workflow consists of six test-centered steps (see the complete workflow in Figure~\ref{fig:mbpp_workflow}). After generating candidate solutions, it performs early exit if all public tests pass; otherwise, it triggers self-correction guided by captured error logs, produces a revised implementation, and re-tests to verify fixes. A final fallback attempt provides an additional chance for recovery. Taken together, these steps form a reasonable and interpretable procedure that explicitly leverages public tests to drive refinement.

\subsection{Experiment analysis}

\paragraph{BayesFlow is more token-efficient}

\begin{table}
\centering
\normalsize % if it overflows in one-column, change to \small
\setlength{\tabcolsep}{6pt}
\renewcommand{\arraystretch}{1.2}
\begin{tabular}{l c c}
\hline
\textbf{Metric} & \textbf{AFlow} & \textbf{BayesFlow} \\
\hline

\multicolumn{3}{c}{\textit{MMLU-Pro, Optimizer: \textit{Claude Haiku 4.5}}} \\
\hline
Input tokens    & 38{,}557{,}324 & \textbf{5{,}290{,}460} \\
Output tokens   & 10{,}062{,}070 & \textbf{6{,}585{,}667} \\
Accuracy        & 0.432          & \textbf{0.571} \\
\hline

\multicolumn{3}{c}{\textit{DROP,  Optimizer: \textit{Claude Haiku 4.5}}} \\
\hline
Input tokens    & 79{,}170{,}565 & \textbf{14{,}527{,}334} \\
Output tokens   & 15{,}907{,}446 & \textbf{3{,}644{,}159} \\
F1              & 0.735          & \textbf{0.780} \\
\hline

\multicolumn{3}{c}{\textit{MMLU-Pro, Optimizer: \textit{Claude Sonnet 3.7}}} \\
\hline
Input tokens    & 63{,}200{,}652 & \textbf{45{,}630{,}330} \\
Output tokens   & \textbf{7{,}114{,}185} & 32{,}351{,}702 \\
Accuracy        & 0.535          & \textbf{0.573} \\
\hline

\multicolumn{3}{c}{\textit{DROP, Optimizer: \textit{Claude Sonnet 3.7}}} \\
\hline
Input tokens    & 114{,}158{,}960 & \textbf{26{,}616{,}937} \\
Output tokens   & 19{,}722{,}922  & \textbf{9{,}447{,}793} \\
F1              & 0.755           & \textbf{0.765} \\
\hline
\end{tabular}
\caption{\textbf{AFlow vs.\ BayesFlow: optimizer-side token usage and task performance.}
We report optimizer token usage  during workflow search using Claude Haiku 4.5 or Claude Sonnet 3.7 as the optimizer LLM and Qwen2.5-7B-Instruct as the executor LLM. \textbf{Bold} marks the better value within each dataset--optimizer block: lower token usage and higher task performance.}
\label{tab:method_compare_claude_stacked}
\end{table}

To further demonstrate the strength of BayesFlow, we compare it against AFlow, the second-best approach in terms of final accuracy in our experiments. As shown in Table~\ref{tab:method_compare_claude_stacked}, BayesFlow is more efficient than AFlow in most settings, achieving higher accuracy at lower cost. We emphasize that while Monte Carlo rollouts are generally more token-intensive, in automatic workflow generation the total token cost is driven not only by how many workflows are produced during training, but also by the cost of evaluating each workflow. This evaluation typically requires running the executor LLM over the full validation set, which can dominate overall usage.

We attribute BayesFlow’s efficiency to two key engineering choices. First, rather than relying on heavyweight, pre-defined operators such as LLM debate or self-consistency, we use a lightweight helper interface, substantially reducing overhead. Second, AFlow evaluates each workflow multiple times to obtain a stable estimate, whereas our implementation evaluates each workflow only once.

In terms of wall-clock runtime, BayesFlow can be further accelerated when sufficient API rate limits are available, since our rollouts can be generated and evaluated fully in parallel. In contrast, most search-based algorithms rely on sequential optimization steps that remain inherently serial, and thus cannot be sped up to the same extent even under sufficient rate limits.

\paragraph{BayesFlow is robust to 
the number of look-ahead rollouts.}

We study the effect of the ratio between the number of look-ahead rollouts $N$ and refined workflows $M$. To isolate the ratio,we fix the proposal budget $N+M$ per round. In the main experiments, the budget is 20, and here we vary $N\in\{5,10,15\}$, adjusting $M$ accordingly. The results in Table~\ref{tab:abl_n} using Claude-3.7-Sonnet executor show that BayesFlow is robust to the choice of $N$. This shows that our implementation is stable across exploration–exploitation settings. We also show that the expected reward almost monotonically increases over rounds in Appendix~\ref{Appendix:analysis} which supports the effectiveness of our look-ahead mechanism.

\begin{table}[t]
  \centering
  
  \renewcommand{\arraystretch}{0.90}
  \begin{tabular}{@{}lccc@{}}
    \toprule
    Dataset   & $N{=}5$                     & $N{=}10$                    & $N{=}15$                    \\
    \midrule
    GSM8K     & 96.3{\tiny$\pm$0.2}         & 96.0{\tiny$\pm$0.3}         & 96.5{\tiny$\pm$0.3}         \\
    MATH      & 70.4{\tiny$\pm$1.9}         & 69.4{\tiny$\pm$4.1}         & 71.1{\tiny$\pm$0.9}         \\
    HotpotQA  & 77.6{\tiny$\pm$0.1}         & 77.5{\tiny$\pm$0.7}         & 77.0{\tiny$\pm$0.7}         \\
    DROP      & 90.8{\tiny$\pm$0.2}         & 90.8{\tiny$\pm$1.5}         & 90.7{\tiny$\pm$0.3}         \\
    MMLU-Pro  & 82.8{\tiny$\pm$0.3}         & 81.8{\tiny$\pm$0.8}         & 82.1{\tiny$\pm$0.4}         \\
    GPQA      & 69.7{\tiny$\pm$1.6}         & 69.2{\tiny$\pm$1.8}         & 70.4{\tiny$\pm$1.2}         \\
    \bottomrule
  \end{tabular}
    \caption{Ablation over the number of partial workflows per round ($N$) with a fixed per round budget. Numbers are mean~$\pm$~standard derivation. 
  }
  \label{tab:abl_n}
\end{table}

\paragraph{BayesFlow demonstrates stronger workflow-level inference-time scaling.}
Unlike the standard inference-time scaling which repeatedly samples answers from a single LLM, we scale \emph{number of workflows} and aggregate their outputs. We report three simple metrics: \textbf{Best@L} counts an example correct if at least one of the $L$ workflows produces the right answer; \textbf{Mean@L} averages the fraction of correct answers among the $L$ outputs; and \textbf{Majority@L} takes a vote on the $L$ answers and is correct only if the most frequent answer matches the ground truth. 

In this experiment, we select the top $L$ workflows generated by both AFlow and BayesFlow. Workflows are produced using Claude-3.5-Sonnet as the optimizer LLM and Claude-3.7-Sonnet as the executor model. To demonstrate transferability, we then evaluated these fixed workflows with Claude-3.5-Sonnet on a randomly sampled 100-example subset of MATH. Across \(L\in\{1,2,4,8\}\), BayesFlow consistently outperforms AFlow, with especially pronounced gains in Best@L, as shown in Table~\ref{tab:workflow_scaling}. This pattern indicates that BayesFlow does not just find a single high-quality workflow; it produces a set of high-quality and diverse workflows, increasing the chance that at least one solves each problem. 

\begin{table}[t]
\centering
% \small
% \setlength{\tabcolsep}{6pt}
\renewcommand{\arraystretch}{0.71}
\begin{tabular}{lccc}
\toprule
\textbf{L} & {Majority@L} & {Mean@L} & {Best@L} \\
\midrule
$1$ & \textbf{74}/45     & \textbf{74}/45     & \textbf{74}/45 \\
$2$ & \textbf{74}/45     & \textbf{72.5}/27   & \textbf{80}/46 \\
$4$ & \textbf{74}/46     & \textbf{74}/30.75  & \textbf{85}/65 \\
$8$ & \textbf{78}/69     & \textbf{70.88}/47.12 & \textbf{86}/81 \\
\bottomrule
\end{tabular}
\caption{Accuracy (\%) shown as \textbf{BayesFlow}/AFlow, exemplified on MATH. The better value in each cell is bold.}
\label{tab:workflow_scaling}
\end{table}

\section{Conclusion}
We introduced Bayesian workflow generation, a principled framework that casts workflow synthesis as particle-based posterior sampling with look-ahead rollouts and text-gradient refinement. Across seven benchmarks and two executor families, BayesFlow delivers systematic gains: Surpassing the strongest baseline by $4.6$ percentage points on Claude-3.7-Sonnet and by $3.5$ percentage points on Qwen-2.5-7B on average. Taken together, these findings indicate that BWG is a promising direction compared to optimization-centric approaches. By emphasizing posterior sampling with lightweight refinement rather than heavy bespoke optimization, BayesFlow achieves competitive accuracy gains.

% \zx{can you mention some future works?}
% \zx{It is required to have a \#Limitations and Risks section for ARR submission, you can reference my prior work like \textit{Beyond Perplexity: Multi-dimensional Safety Evaluation of LLM Compression}} \Bo{Thank you! Sure we will add them}

\section*{Limitations}
The expressivity of BayesFlow partly stems from parallel look-ahead rollouts, which increase inference-time cost. A promising direction is a principled controller that dynamically tunes hyperparameters from on-line signals to reduce computation without hurting accuracy. Our current implementation prunes most rollout traces without long-term reuse. A stronger memory mechanism that retains and prioritizes high-value trajectories could better exploit complete look-ahead evidence and eliminate redundant sampling. Our evaluation focuses on two executor families in seven benchmarks but can be extended further; future work will extend the analysis to recent high-reasoning models and broader settings, e.g., multimodality or agents, to more rigorously assess the portability of sampling-based design and its budget–accuracy trade-offs.

\section*{Ethical considerations}

We will release the full codebase to enable  reproduction of our results. All datasets used are publicly available under their respective licenses; no human-subjects data, personally identifiable information, or proprietary assets are involved. We used AI tools only for minor language refinement (grammar, clarity, and formatting) during final manuscript preparation. No AI was used for idea generation, analysis, or text drafting, and all authors take full responsibility for the content.

\bibliography{ARR}
\newpage
\appendix
\section{Connection to reinforcement learning and inference scaling law}~\label{subsection:optimal}

 In reinforcement learning, a typical objective for an unknown distribution $q$ is
$
\max_{q} \mathbb{E}_{q}\!\big[R(s)\big]
-
\mathrm{KL}\big(q(s)\,\|\,p(s)\big)
$~\citep{schulman2017proximal,ouyang2022training} which is to maximize the reward while maintaining the KL divergence with respect to the prior distribution to mitigate model collapse~\citep{moalla2024no} and reward hacking~\citep{skalse2022defining}. 
One can show that the optimizer is exactly the target distribution in~\eqref{eq:target}; see Theorem~\ref{theorem:optimal} for a detailed proof.
On the inference side, weighted majority voting~\citep{wu2024inference} for LLM reasoning has been shown to be a scalable and effective way to improve performance; in essence, it corresponds to drawing and re-weighting samples in accordance with the same goal.
These insights naturally lead to a \textbf{principled} posterior sampling view of workflow generation.

\paragraph{Optimal solution in Reinforcement learning as posterior sampling.}
\begin{theorem}~\label{theorem:optimal}
Let \(p(s_{1:T})\) be a prior distribution, and let $R$ be a measurable reward function.
Consider the optimization over distributions \(q\):
\[
\max_{q}\; \mathcal{J}(q)
\;\;:=\;\; \mathbb{E}_{q}\!\big[R(s_{1:T})\big] \;-\; \mathrm{KL}\!\left(q\,\|\,p\right),
\]
where \(\mathrm{KL}(q\|p)=\mathbb{E}_{q}\!\left[\log \tfrac{q}{p}\right]\).
Then the unique maximizer is
\[
q^{\star}(s_{1:T}) \;=\; \frac{p(s_{1:T})\,\exp\!\big(R(s_{1:T})\big)}{Z}
\] where
\[
Z \;=\; \mathbb{E}_{p}[\exp\!\big(R(s_{1:T})\big)]
\]
\end{theorem}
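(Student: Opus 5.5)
The plan is the standard Gibbs variational argument: rewrite $\mathcal{J}(q)$ as a constant minus a KL divergence to $q^\star$, then read off the maximizer from the nonnegativity and definiteness of KL.

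First I would check that $q^\star$ is a well-defined distribution. Since $R$ is measurable, and in our setting $R\in[0,1]$, we have $1\le \exp(R)\le e$. Hence $Z=\mathbb{E}_p[\exp(R)]\in[1,e]$ is finite and positive. It follows that $q^\star$ is a probability distribution, mutually absolutely continuous with $p$. For a general measurable $R$ I would simply assume $Z<\infty$.

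Next I would restrict attention to $q\ll p$. Otherwise $\mathrm{KL}(q\|p)=+\infty$ and $\mathcal{J}(q)=-\infty$, so such $q$ cannot be maximizers. For $q\ll p$, I would substitute $p=q^\star Z\exp(-R)$, i.e. $\log p = \log q^\star + \log Z - R$, into the KL term. This gives
\[
\mathrm{KL}(q\|p)=\mathbb{E}_q\!\left[\log\tfrac{q}{q^\star}\right]+\log Z-\mathbb{E}_q[R],
\]
and therefore
\[
\mathcal{J}(q)=\log Z-\mathrm{KL}(q\,\|\,q^\star).
\]
To make this manipulation legitimate I need $\mathbb{E}_q[R]$ to be finite so that the terms can be split. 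This holds for bounded $R$. In general I would treat the cases where $\mathbb{E}_q[R]$ is infinite separately, or assume integrability.

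Finally, Gibbs' inequality gives $\mathrm{KL}(q\|q^\star)\ge 0$, with equality if and only if $q=q^\star$ almost everywhere. So $\mathcal{J}(q)\le \log Z$, the bound is attained exactly at $q^\star$, and the maximizer is unique. As a byproduct, the optimal value is $\log Z$.

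The main obstacle is the measure-theoretic bookkeeping in the middle step, namely justifying the split of expectations without an $\infty-\infty$ ambiguity. In the paper's setting, where $R\in[0,1]$ is bounded, this issue disappears, so I would state the bounded case explicitly and remark on the general integrable case.
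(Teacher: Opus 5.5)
Your proof is correct, apart from one sign slip noted below, and it takes a genuinely different route from the paper.

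The paper uses calculus of variations. It adjoins a Lagrange multiplier $\lambda$ for $\int q=1$, sets the first variation to zero to get $\log q = R+\log p-1-\lambda$, and exponentiates to obtain $q\propto p\,e^{R}$. That argument is constructive: it derives the form of the optimizer rather than verifying a candidate you already have. However, it only locates a stationary point. It never argues that this point is the global maximizer or that it is unique, which the theorem asserts. It also leaves the positivity constraint and the case $q\not\ll p$ unaddressed. Closing these gaps would need an extra ingredient, such as strict concavity of $q\mapsto\mathbb{E}_q[R]-\mathrm{KL}(q\|p)$ on the convex set of distributions.

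Your Gibbs-variational identity $\mathcal{J}(q)=\log Z-\mathrm{KL}(q\|q^\star)$ gives three things in one step: global optimality, uniqueness (from the equality case of Gibbs' inequality), and the optimal value $\log Z$. It needs no differentiability assumptions and makes the integrability issues explicit. Its only cost is that you need $q^\star$ in hand, and the statement supplies it.

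One slip to fix: your intermediate display has the signs of the last two terms reversed. Substituting $\log p=\log q^\star+\log Z-R$ gives
\[
\mathrm{KL}(q\|p)=\mathrm{KL}(q\|q^\star)-\log Z+\mathbb{E}_q[R],
\]
not $\mathrm{KL}(q\|q^\star)+\log Z-\mathbb{E}_q[R]$. As you wrote it, plugging into $\mathcal{J}$ would yield $2\,\mathbb{E}_q[R]-\log Z-\mathrm{KL}(q\|q^\star)$. You can also see the displayed version must be wrong by taking $q=p$: it would force $\mathrm{KL}(p\|q^\star)=\mathbb{E}_p[R]-\log Z\le 0$ by Jensen. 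With the correct signs the $\mathbb{E}_q[R]$ terms cancel, and your stated identity $\mathcal{J}(q)=\log Z-\mathrm{KL}(q\|q^\star)$ follows.
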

\begin{proof}
Introduce a Lagrange multiplier \(\lambda\) for the normalization constraint \(\int q=1\) and form
\[
\begin{aligned}
\mathcal{L}(q,\lambda)
&= \int q(s)\,R(s)\,ds \;-\; \int q(s)\log\!\frac{q(s)}{p(s)}\,ds \\
&\quad + \lambda\!\left(1-\int q(s)\,ds\right).
\end{aligned}
\]
where we write \(s\equiv s_{1:T}\) for brevity.
Taking the first variation w.r.t.\ \(q\) and setting it to zero yields, for almost every \(s\),
\[
R(s) - \Big(\log q(s) - \log p(s) + 1\Big) - \lambda = 0
\]
Hence, we have
\[
\log q(s) \;=\; R(s) + \log p(s) - 1-\lambda,
\]
Exponentiating gives
\[
q(s) \propto p(s)\, e^{R(s)}.
\]

\end{proof}

\paragraph{Weighted majority voting as posterior sampling.}
Let the target posterior over workflows be \(q(s)\propto p(s)\exp(R(s))\), where \(p\) is a meta-optimizer prior and \(R\) is a reward oracle. Draw \(N\) i.i.d.\ samples \(\{s^{(i)}\}_{i=1}^N\sim p\) and form importance weights
\(w_i \propto \exp(R(s^{(i)}))\), normalized so that \(\sum_{i=1}^N w_i=1\).
Define the weighted empirical measure
\[
\hat q_N \;=\; \sum_{i=1}^N w_i\,\delta_{s^{(i)}}.
\]
If \(\mathbb{E}_{p}\!\left[e^{R(s)}\right]<\infty\), then by the law of large numbers for the sampling of self-normalized importance,
as \(N\) grows, the empirical distribution $\hat q_N$ concentrates on the target \(q\), and sampling by multinomial resampling indices with probabilities \(\{w_i\}\) yields draws asymptotically distributed as \(q\).

\section{Extended Related Work}
\label{app:extended_related_work}

\subsection{Agentic Workflows}
Agentic workflows represent a fundamental paradigm in LLM applications, consisting of structured sequences of LLM invocations designed to solve complex tasks through predefined processes \citep{sirui2024meta, jia2024mobile}. 
Unlike autonomous agents that make dynamic decisions based on environmental feedback, agentic workflows operate through static and predetermined logical sequences (or with conditionals) that can be systematically designed and refined \citep{zhuge2023mindstorms, wang2023voyager}. 
This structured approach enables workflows to leverage existing human domain expertise and iterative refinement processes, making them particularly suitable for tasks where consistent, reproducible outcomes are desired. 
% General workflows focus on universal methodologies like Chain-of-Thought \citep{wei2022COT}, Self-Consistency \citep{wang2022COTSC}, Self-Refine \citep{madaan2023self}, and MultiPersona \citep{wang2023multipersona}, while domain specific workflows target particular applications such as code generation \citep{tal2024code, dong2024agent}, data analysis \citep{hong2024data, yu2024hai, bo2024nlsql, yi2024gen}, mathematical problem solving \citep{zhong2024achieving}, and question answering \citep{zhou2024language}. 
% Despite their effectiveness, manual design requires substantial expertise, limiting scalability.
The landscape of agentic workflows can be broadly categorized into general and domain-specific approaches.

{\bf Generalist Agentic Workflows:} General workflows focus on universal problem-solving methodologies that can be applied across diverse domains. 
Notable examples include Chain-of-Thought prompting \citep{wei2022COT}, which guides LLMs through step-by-step reasoning processes; Self-Consistency \citep{wang2022COTSC}, which generates multiple reasoning paths and selects the most consistent answer; Self-Refine \citep{madaan2023self}, which iteratively improves outputs through self-critique; and MultiPersona approaches \citep{wang2023multipersona}, which leverage different perspectives to enhance problem-solving capabilities. 
Additionally, MedPrompt \citep{nori2023medprompt} demonstrates how structured prompting strategies can achieve expert-level performance in specialized domains through carefully designed workflow components.

{\bf Domain-Specific Agentic Workflows:} Domain-specific workflows are tailored to address the unique challenges and requirements of particular application areas. 
In code generation, workflows often incorporate testing, debugging, and iterative refinement processes, with notable examples including AlphaCodium \citep{tal2024code}, which transitions from prompt engineering to flow engineering, and AgentCoder \citep{dong2024agent}, which employs multi-agent collaboration for iterative code optimization. 
Data analysis workflows typically combine data exploration, statistical analysis, and visualization components \citep{hong2024data, yu2024hai, bo2024nlsql, yi2024gen}. Mathematical problem-solving workflows emphasize systematic reasoning, calculation verification, and solution validation \citep{zhong2024achieving}. 
Question-answering workflows often integrate information retrieval, fact verification, and answer synthesis processes, utilizing techniques such as Language Agent Tree Search to unify reasoning, acting, and planning \citep{zhou2024language}. 
While these manually designed workflows have demonstrated significant effectiveness in their respective domains, their development requires substantial human expertise and domain knowledge, limiting their scalability and adaptability to new problem areas.

\subsection{Automatic Agentic Workflow Generation}
The manual design of effective agentic workflows requires substantial human expertise and domain knowledge, creating a significant bottleneck for the widespread adoption of LLM-based systems across diverse applications. 
To address this limitation, recent research has focused on automating the discovery and optimization of agentic workflows, aiming to reduce human intervention while maintaining or improving performance. This emerging field encompasses various approaches that can be broadly categorized into three dimensions: automated prompt optimization \citep{fernando2023promptbreeder, wang2023promptagent, yang2023large, yuksekgonul2024textgrad}, hyperparameter tuning \citep{saad2024archon}, and complete workflow structure optimization \citep{li2024autoflow, hu2024automated, zhuge2024gptswarm}. 
The fundamental challenge lies in effectively navigating the vast search space of possible workflow configurations while maintaining computational efficiency and ensuring the generated workflows generalize across different tasks and domains. 
These automated approaches can be further distinguished by their optimization methodology: training free methods that rely on iterative refinement through LLM based search, and training based methods that learn to generate optimal workflows through supervised or reinforcement learning techniques.

{\bf Training Free Approaches:} Training free automated workflow optimization methods leverage the inherent capabilities of pre trained LLMs to iteratively improve workflow designs without requiring parameter updates. 
Early approaches have primarily focused on optimizing individual components within fixed workflow structures. DSPy~\citep{khattab2023dspy} provides a programming model that separates workflow logic from prompt engineering, allowing for systematic optimization of prompting strategies while maintaining workflow structure. 
TextGrad~\citep{yuksekgonul2024textgrad} treats prompt optimization as a form of automatic differentiation through text, enabling gradient-based optimization of prompting strategies.
PromptBreeder~\citep{fernando2023promptbreeder} introduces self-referential improvement mechanisms that evolve prompts through iterative refinement processes. 
These component-level optimization methods have demonstrated significant improvements in reasoning performance, but remain constrained by their focus on predefined workflow templates and limited exploration of alternative structural configurations.

More recent work has attempted to address the limitations of component-level optimization by exploring complete workflow structure generation. 
ADAS \citep{hu2024automated} represents workflows using code structures and employs linear heuristic search algorithms to discover effective configurations, though it faces efficiency challenges due to the accumulation of irrelevant information and coarse workflow storage mechanisms. 
GPTSwarm \citep{zhuge2024gptswarm} uses graph-based representations combined with reinforcement learning techniques for workflow optimization, but struggles with conditional states due to the limitations of graph structures in expressing complex logical relationships. 
AFLOW addresses these representational limitations by reformulating workflow optimization as a Monte Carlo Tree Search problem over code-represented workflows, where LLM-invoking nodes are connected by flexible edges that can express complex relationships including conditional logic, loops, and parallel execution \citep{zhang2024aflow}. 
However, training free methods fundamentally suffer from their inability to effectively leverage historical optimization data and environmental feedback, often performing no better than random workflow sampling and struggling with convergence to optimal solutions within limited iterations.

\textbf{Training Based Approaches:} Training based automated workflow generation and optimization methods address the limitations of training-free approaches by learning to generate optimal workflows through explicit parameter optimization. 
These methods can accumulate knowledge from previous optimization attempts and adapt their generation strategies based on empirical feedback. 
Weak-for-Strong Harnessing (W4S) introduces a novel paradigm where a smaller, trainable meta-agent is optimized via reinforcement learning to design workflows that effectively harness stronger language models~\citep{nie2025weak}. 
The approach formulates workflow optimization as a multi-turn Markov Decision Process and employs Reinforcement Learning for Agentic Workflow Optimization (RLAO), enabling the meta-agent to learn from both successful and failed workflow attempts through reward-weighted regression. 
This training based approach demonstrates superior performance compared to training free methods while maintaining cost efficiency by training smaller models rather than fine-tuning large ones.

ScoreFlow takes a different training based approach by leveraging preference optimization techniques adapted for workflow generation \citep{wang2025scoreflow}. 
The framework introduces Score DPO, a novel variant of Direct Preference Optimization that incorporates quantitative evaluation scores directly into the optimization process, addressing the variance and inaccuracies inherent in preference data that can slow convergence in standard preference optimization methods. 
ScoreFlow generates multiple workflows per task, evaluates their performance using quantitative metrics, and uses these scores to construct preference pairs for training the workflow generator through gradient-based optimization. 
This approach enables adaptive workflow generation where different workflows can be optimized for different types of problems, improving scalability compared to methods that optimize a single workflow for entire task sets. 
Both training based approaches demonstrate significant advantages over their training free counterparts, achieving superior performance while enabling smaller models to outperform larger ones through optimized workflow design, highlighting the potential of learned optimization strategies in automated agentic system development.

% \textbf{Concurrent Works:} We list the concurrent works which have appeared on Arxiv less than 3 months prior to our submission date, but do not benchmark against these methods.

\section{Quick proof of Theorem~\ref{theorem:1}}\label{Appendix:proof}

\begin{proof}

With $M{=}0$, Algorithm~\ref{alg:bayesflow_step} is an SMC sampler for the target
$p(s_{1:T}) \exp[R(s_{1:T})]$.
In step $t$, the mutation kernel is $p(s_t| s_{1:t-1})$ and the weight is
$\mathbb{E}_{p(s_{t+1:T} | s_{1:t})}\exp( R(s_{1:T}))$.
BWG uses the Monte Carlo estimator
$w^{(i)} = \frac1K\sum_{k=1}^K \exp\big( R(s_{1:t},\tilde s_{t+1:T}^{(k)})\big)$
with $\tilde s_{t+1:T}^{(k)} \sim p(\cdot| s_{1:t})$ i.i.d. Hence, the weight $w^{(i)}$ is
positive and unbiased. Then we can apply Theorem 3.5 in~\citet{rohrbach2022convergence} to prove the convergence of the equivalence distribution. To prove the inequality, let $Z(\beta)=\mathbb{E}_{p}[\exp(\beta R)]$ and the normalized distribution $q_\beta=\exp(\beta R)p/Z(\beta)$. Then $\frac{d}{d\beta}\,\mathbb{E}_{q_\beta}[R] \ge 0$.
Thus $\mathbb{E}_{q_\beta}[R]$ is non-decreasing in $\beta$, and for $\beta>0$,
$\mathbb{E}_{q_\beta}[R]\;\ge\;\mathbb{E}_{q_0}[R]\;=\;\mathbb{E}_{p}[R].$

\end{proof}

\section{Proof of Theorem~\ref{theorem:2}}\label{Appendix:proof_2}
\begin{proof}
For each final workflow, either it was never selected from a refined candidate at any step, or it was selected
from a refined candidate at least once. Conditioning on the event that refinement is never selected, the
resampling procedure reduces to the no-refiner algorithm, so the conditional distribution is exactly $q$.
Therefore, by the law of total probability, the final law admits the decomposition
$\pi_T=\alpha q+(1-\alpha)\pi_{\mathrm{rest}}$.

Let $\mu_t^0$ and $\mu_t^1$ be the prefix distributions at step $t$ without or with the refiner. Fix a step $t$. Consider the one-step update that maps a prefix distribution to the next-prefix distribution.
Introduce an intermediate distribution $\widetilde{\mu}_{t+1}$ defined as:
start from prefixes distributed as $\mu_t^1$, but perform the \emph{no-refiner} update for one step.
Then by the triangle inequality, $
\mathrm{TV}(\mu_{t+1}^1,\mu_{t+1}^0)
\;\le\;
\mathrm{TV}(\mu_{t+1}^1,\widetilde{\mu}_{t+1})
\;+\;
\mathrm{TV}(\widetilde{\mu}_{t+1},\mu_{t+1}^0).
$

For the first term, by assumption, for any prefix law the distributions over \emph{complete} workflows
before vs.\ after refinement differ by at most $\varepsilon_t$ in total variation.
The next-prefix distribution is obtained from a complete-workflow distribution by applying the projection map
that keeps only the first $t{+}1$ steps.
By pushforward contraction of total variation,
$\mathrm{TV}(\mu_{t+1}^1,\widetilde{\mu}_{t+1})
\;\le\;
\varepsilon_t.
$

For the second term, $\widetilde{\mu}_{t+1}$ and $\mu_{t+1}^0$ are obtained by applying the \emph{same}
 randomized update rule to inputs $\mu_t^1$ and $\mu_t^0$, respectively.
Total variation cannot increase under applying the same post-processing, hence
$\mathrm{TV}(\widetilde{\mu}_{t+1},\mu_{t+1}^0)
\;\le\; \mathrm{TV}(\mu_t^1,\mu_t^0).
$ Combining the two bounds yields the recursion
\[
\mathrm{TV}(\mu_{t+1}^1,\mu_{t+1}^0)
\;\le\;
\varepsilon_t + \mathrm{TV}(\mu_t^1,\mu_t^0).
\]
Hence
\[
\mathrm{TV}(\mu_t^1,\mu_t^0)\;\le\;\sum_{k=1}^{t-1}\varepsilon_k,\qquad t=1,\ldots,T.
\]
If $\varepsilon_k\le \varepsilon$ for all $k$, then $\mathrm{TV}(\mu_t^1,\mu_t^0)\le (t-1)\varepsilon$.

\end{proof}

\section{Hyperparameters}
\label{Appendix:hyperparameters}

Across all main experiments, we use a fixed set of hyperparameters: rollout count $K=1$, number of partial workflows per round $N=10$, number of refined workflows $M=10$, and maximum steps $T=5$. Note that the optimal configuration can vary by dataset: In \textsc{MATH} we observe consistent gains through $T=5$ rounds, whereas on other datasets three steps typically suffice to reach the best workflow. In addition, each complete workflow is evaluated on the validation set only once, while \textsc{AFlow} evaluates each workflow five times; although this increases variance, the sampling-based nature of our Bayesian framework largely mitigates it.

\section{More experiment analysis} \label{Appendix:analysis}

\paragraph{Expected reward almost monotonically increases over rounds.}

To assess the effectiveness of the parallel look-ahead mechanism, we plot the maximal, minimal, and average rewards at each round in Fig.~\ref{fig:val_curves}. To isolate the rollout mechanism, we disable sequential refinement by setting $M{=}0$ in these experiments. We observe that the mean validation accuracy improves monotonically or near-monotonically across rounds, indicating that look-ahead exploration alone is indeed effective. The greatest gain occurs between rounds~1 and~2 suggesting that early pruning of low-quality workflows accounts for most of the improvement. The min–max band also narrows markedly for MMLU-Pro, showing that the workflow pool becomes more homogeneous as unpromising branches are discarded. Overall, Fig.~\ref{fig:val_curves} demonstrates that parallel look-ahead can reliably guide step-level generation without the need of a process reward model.

\begin{figure}
  \centering
  \includegraphics[width=0.5\textwidth]{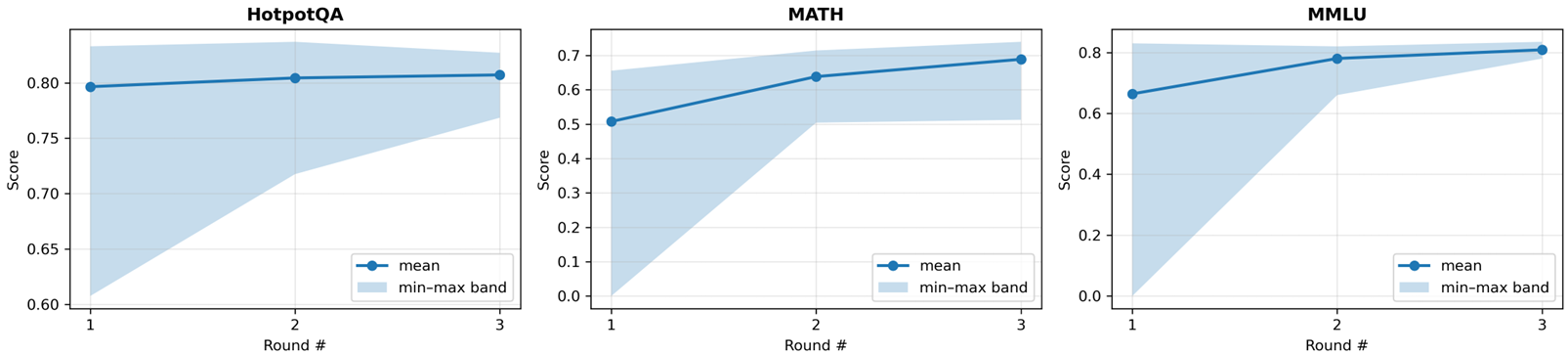}
  \caption{Evolution of validation accuracy across three sampling rounds.  Solid lines show the mean over $K=20$ samples; the shaded region spans the minimum and maximum run in each round.}
  % \zx{this figure looks bad, can you instead change it to 1x3 rather than 3x1? Or simply offload to Appendix}
  
  \label{fig:val_curves}
\end{figure}

\section{Helper Functions}\label{appendix:helper}

We present the function signatures of two helper functions which are exposed to the meta-optimizer LLM for workflow generation.

\onecolumn
\captionsetup{type=figure}
\begin{lstlisting}
def call_llm(
    messages: List[str],
    temperature: float,
    num_of_response: int,
    agent_role: str,
    instructions: str
) -> List[str]:
    """
    Call the ChatCompletion API and return a list of text responses.
    ----------
    messages : List[str]
        A list of input messages
    temperature : float
        Sampling temperature. Higher values (e.g., 0.8) yield more random outputs;
        lower values (e.g., 0.2) make the output more focused and deterministic.
    num_of_response : int
        Number of responses to generate from the model.
    agent_role : str
        The role/persona for the assistant, e.g., "helpful teacher". Included in the system prompt.
    instructions : str
        Additional task-specific guidance. Appended to the system prompt.
    Returns
    -------
    List[str]
        A list of generated response strings (one per requested completion).
    """
\end{lstlisting}
\captionof{figure}{Helper function on chat completion }
\vspace{1cm}

\captionsetup{type=figure}
\begin{lstlisting}
def exec_code(solution: str, entry_point: str) -> Union[str, List[Dict], Dict]:
    """
    This function tests a Python function implementation against public test cases, providing feedback about failures and execution errors.

    Args:
        solution (str): Complete Python code containing the function implementation.

        entry_point (str): Name of the function to test.

    Returns:
        Union[str, List[Dict], Dict]:
            - If ALL tests pass: returns the string "no error".
            - If some tests fail (AssertionError): returns List[Dict], each with:
                {
                  "test_case": str,        # the failing assertion
                  "error_type": "AssertionError",
                  "error_message": str,    # assertion message
                  "traceback": List[str]   # full traceback for debugging
                }
            - If code execution fails (syntax/runtime): returns Dict with:
                { "exec_fail_case": str }  # message describing the execution failure
    """
\end{lstlisting}
\captionof{figure}{Helper function on public test sets }

\section{Workflows from BayesFlow}\label{appendix:case_study}

\captionsetup{type=figure}
\begin{lstlisting}
class Workflow:
    def __init__(
        self,
        name: str,
        llm_config: dict,
        dataset: DatasetType,
    ) -> None:
        """
        Initialize the workflow with name, LLM configuration and dataset type.
        
        Args:
            name: Name of the workflow
            llm_config: Configuration for the LLM
            dataset: Type of dataset to use
        """
        self.name = name
        self.dataset = dataset
        self.llm = create_llm_instance(llm_config)

    async def __call__(self, problem: str, entry_point: str) -> Tuple[str, float]:
        """
        Execute the workflow to generate and test solutions.
        
        Args:
            problem: Problem description
            entry_point: Name of the function to implement
            
        Returns:
            Tuple containing the solution code and total cost
        """
        # Step 1: Generate initial solution
        messages = [f"Write a Python function named {entry_point} that solves this problem:\n{problem}"]
        solutions = await self.llm.call_llm(
            messages=messages,
            temperature=0.2,
            num_of_response=2,
            agent_role="expert Python programmer",
            instructions="Write clean, efficient code that exactly matches the function name and parameters specified. Return only the function implementation without explanations."
        )

        # Step 2: Test solutions
        test_results = []
        for solution in solutions:
            result = exec_code(solution, entry_point)
            test_results.append((solution, result))

        # Step 3: If any solution passes tests, return it
        for solution, result in test_results:
            if result == "no error":
                return solution, self.llm.get_usage_summary()["total_cost"]

        # Step 4: If no solution passes, try to fix the best solution
        best_solution = solutions[0]
        messages = [
            f"The following solution failed:\n{best_solution}\n\nError details:\n{test_results[0][1]}\n\nPlease fix the code to pass the tests.",
        ]
        fixed_solutions = await self.llm.call_llm(
            messages=messages,
            temperature=0.1,
            num_of_response=1,
            agent_role="expert Python debugger",
            instructions="Return only the corrected function implementation without explanations."
        )

        # Step 5: Test fixed solution
        for fixed_solution in fixed_solutions:
            result = exec_code(fixed_solution, entry_point)
            if result == "no error":
                return fixed_solution, self.llm.get_usage_summary()["total_cost"]

        # Step 6: If still no success, try one final attempt with more context
        messages = [
            f"Write a function named {entry_point} that solves this problem:\n{problem}\n\n"
            f"Previous attempts failed. Here are the test results:\n{test_results}\n\n"
            "Please implement a correct solution that passes all tests."
        ]
        final_solutions = await self.llm.call_llm(
            messages=messages,
            temperature=0.3,
            num_of_response=1,
            agent_role="senior Python developer",
            instructions="Focus on passing the test cases. Return only working code."
        )

        # Return the final attempt, whether it works or not
        return final_solutions[0], self.llm.get_usage_summary()["total_cost"] 
    \end{lstlisting}
\captionof{figure}{MBPP workflow }
\label{fig:mbpp_workflow}
\vspace{1cm}

\captionsetup{type=figure}
\begin{lstlisting}
class Workflow:
    def __init__(
        self,
        name,
        llm_config,
        dataset: DatasetType,
    ) -> None:
        self.name = name
        self.dataset = dataset
        self.llm = create_llm_instance(llm_config)

    async def __call__(self, problem: str) -> tuple[str, float]:
        """
        Implements a workflow for DROP dataset questions requiring discrete reasoning.
        Returns a tuple containing the final answer and total API cost.
        """
        # Step 1: Parse the question and identify key information
        parse_prompt = [
            f"Given this DROP dataset question, identify the type of reasoning required and key information:\n{problem}\n"
            "Focus on identifying whether it requires numerical reasoning, entity tracking, or date manipulation."
        ]
        analysis = await self.llm.call_llm(
            messages=parse_prompt,
            temperature=0.2,
            num_of_response=1,
            agent_role="analytical assistant",
            instructions="Analyze the question type and key information needed."
        )

        # Step 2: Generate detailed reasoning steps
        reasoning_prompt = [
            f"Question: {problem}\n"
            f"Analysis: {analysis[0]}\n"
            "Solve this step by step, showing your work clearly."
        ]
        reasoning_responses = await self.llm.call_llm(
            messages=reasoning_prompt,
            temperature=0.3,
            num_of_response=2,
            agent_role="mathematical expert",
            instructions="Show detailed step-by-step reasoning. Be precise with calculations and logic."
        )

        # Step 3: Cross-validate answers and extract final result
        validation_prompt = [
            f"Question: {problem}\n"
            f"Reasoning Path 1: {reasoning_responses[0]}\n"
            f"Reasoning Path 2: {reasoning_responses[1]}\n"
            "Compare these solutions and provide the final answer in its simplest form - just the number, name, or date without explanation."
        ]
        final_answers = await self.llm.call_llm(
            messages=validation_prompt,
            temperature=0.1,
            num_of_response=1,
            agent_role="precise evaluator",
            instructions="Extract only the final answer in its simplest form - no explanations or additional text."
        )

        # Step 4: Format the answer according to requirements
        answer = final_answers[0].strip()
        # Remove any explanatory text or prefixes
        if "answer is" in answer.lower():
            answer = answer.split("answer is")[-1]
        answer = answer.strip(" .,:")
        
        # Return the final answer and total cost
        return answer, self.llm.get_usage_summary()["total_cost"] 
    \end{lstlisting}
\captionof{figure}{DROP workflow }
\vspace{1cm}

\captionsetup{type=figure}
\begin{lstlisting}
class Workflow:
    def __init__(
        self,
        name,
        llm_config,
        dataset: DatasetType,
    ) -> None:
        self.name = name
        self.dataset = dataset
        self.llm = create_llm_instance(llm_config)

    async def __call__(self, problem: str) -> tuple[str, float]:
        """
        Implements a workflow for GPQA multiple-choice questions.
        Returns:
        - answer: str - Single capital letter (A-D) representing the chosen option
        - cost: float - Total cost of LLM API calls
        """
        # Step 1: Domain-specific knowledge retrieval with emphasis on quantitative aspects
        knowledge_prompt = f"""
        For this graduate-level science question:
        {problem}

        1. Identify the specific scientific domain and sub-topic
        2. List the key scientific principles and mathematical relationships
        3. Write out ALL relevant formulas, equations, and units
        4. Specify numerical ranges, constants, or threshold values
        5. Note critical assumptions and boundary conditions
        6. Identify potential calculation pitfalls or unit conversion issues

        Structure your response with clear headers and numbered equations.
        """
        knowledge_responses = await self.llm.call_llm(
            messages=[knowledge_prompt],
            temperature=0.2,
            num_of_response=2,
            agent_role="domain expert",
            instructions="Provide comprehensive quantitative knowledge relevant to the question."
        )

        # Step 2: Mathematical analysis and calculations
        math_prompt = f"""
        Using the established knowledge bases:
        Knowledge Base 1: {knowledge_responses[0]}
        Knowledge Base 2: {knowledge_responses[1]}

        For the question:
        {problem}

        1. Set up the mathematical framework:
           - Define all variables and their units
           - List equations to be used
           - Identify given values and unknowns
        2. Perform step-by-step calculations:
           - Show all work clearly
           - Include unit conversions
           - Note intermediate results
        3. Calculate numerical results for each option if applicable
        4. Estimate reasonable ranges for answers

        Structure as a detailed mathematical solution.
        """
        math_responses = await self.llm.call_llm(
            messages=[math_prompt],
            temperature=0.2,
            num_of_response=2,
            agent_role="mathematical physicist",
            instructions="Show detailed mathematical work and calculations."
        )

        # Step 3: Scientific reasoning with quantitative support
        analysis_prompt = f"""
        Based on the mathematical analysis:
        Calculation Path 1: {math_responses[0]}
        Calculation Path 2: {math_responses[1]}

        For the question:
        {problem}
        
        1. Evaluate how each calculation approach supports/contradicts the options
        2. Consider numerical accuracy and significant figures
        3. Check for mathematical consistency with physical principles
        4. Identify which calculations most definitively distinguish between options
        """
        analysis_responses = await self.llm.call_llm(
            messages=[analysis_prompt],
            temperature=0.3,
            num_of_response=2,
            agent_role="expert scientist",
            instructions="Analyze calculations and their implications for each option."
        )

        # Step 4: Generate answers with quantitative justification
        solution_prompt = f"""
        Using the mathematical results and analysis:
        Analysis 1: {analysis_responses[0]}
        Analysis 2: {analysis_responses[1]}
        
        Choose the correct answer (A, B, C, or D) and explain:
        1. How the calculations support this choice
        2. Why the numerical results rule out other options
        3. Any mathematical constraints that confirm this answer

        Question: {problem}
        """
        solution_responses = await self.llm.call_llm(
            messages=[solution_prompt],
            temperature=0.2,
            num_of_response=3,
            agent_role="scientific expert",
            instructions="Select and justify answer based on quantitative evidence."
        )

        # Step 5: Final verification with emphasis on mathematical consistency
        verification_prompt = f"""
        Review these solutions:
        Solution 1: {solution_responses[0]}
        Solution 2: {solution_responses[1]}
        Solution 3: {solution_responses[2]}

        Question: {problem}
        
        Verify:
        1. Mathematical correctness
        2. Consistency with physical laws
        3. Proper handling of units and significant figures
        
        Output only a single capital letter (A, B, C, or D) representing the final answer.
        """
        final_response = await self.llm.call_llm(
            messages=[verification_prompt],
            temperature=0.1,
            num_of_response=1,
            agent_role="scientific reviewer",
            instructions="Output only a single capital letter (A, B, C, or D) as the final answer."
        )

        # Extract the single letter answer
        answer = final_response[0].strip()
        if len(answer) > 1:
            answer = ''.join(char for char in answer if char in 'ABCD')[:1]

        return answer, self.llm.get_usage_summary()["total_cost"] 
    \end{lstlisting}
\captionof{figure}{GPQA workflow }
\vspace{1cm}

\captionsetup{type=figure}
\begin{lstlisting}

import re
import statistics
class Workflow:
    def __init__(
        self,
        name,
        llm_config,
        dataset: DatasetType,
    ) -> None:
        self.name = name
        self.dataset = dataset
        self.llm = create_llm_instance(llm_config)

    async def __call__(self, problem: str) -> tuple[str, float]:
        """
        Implements a multi-step workflow for solving GSM8K math problems.
        Returns the final numeric answer in LaTeX boxed format and total API cost.
        """
        # Step 1: Generate detailed step-by-step solutions
        solution_prompt = [
            f"Solve this math problem step by step:\n{problem}\n"
            "Show your work clearly, and end with the final numeric answer only (no units or symbols)."
        ]
        solutions = await self.llm.call_llm(
            messages=solution_prompt,
            temperature=0.2,  # Low temperature for precise calculations
            num_of_response=2,  # Generate two solutions as verification
            agent_role="math teacher",
            instructions="Break down the problem into clear steps. Verify calculations carefully."
        )

        # Step 2: Extract numeric answers from solutions
        numbers = []
        for solution in solutions:
            matches = re.findall(r'-?\d*\.?\d+', solution.split('\n')[-1])
            if matches:
                numbers.append(float(matches[0]))
        
        # Step 3: Verify answers with a different approach
        verify_prompt = [
            f"Question: {problem}\n"
            "Solve this problem using a different method than before. "
            "Give only the final numeric answer without any units."
        ]
        verification = await self.llm.call_llm(
            messages=verify_prompt,
            temperature=0.3,
            num_of_response=1,
            agent_role="math expert",
            instructions="Focus on accuracy. Double-check your calculations."
        )

        # Step 4: Extract verification number
        verify_num = None
        if verification:
            matches = re.findall(r'-?\d*\.?\d+', verification[0].split('\n')[-1])
            if matches:
                verify_num = float(matches[0])
                numbers.append(verify_num)

        # Step 5: Determine final answer through consensus
        if not numbers:
            # Fallback if no numbers extracted
            final_prompt = [
                f"What is the final numeric answer to this problem? {problem}\n"
                "Give ONLY the number, no explanation or units."
            ]
            final_attempt = await self.llm.call_llm(
                messages=final_prompt,
                temperature=0.1,
                num_of_response=1,
                agent_role="math expert",
                instructions="Provide only the final numeric answer."
            )
            matches = re.findall(r'-?\d*\.?\d+', final_attempt[0])
            final_answer = float(matches[0]) if matches else 0.0
        else:
            # Use median to handle outliers
            final_answer = statistics.median(numbers)

        # Step 6: Format answer in LaTeX boxed notation
        formatted_answer = f"\\boxed{{{int(final_answer) if final_answer.is_integer() else final_answer}}}"
        
        return formatted_answer, self.llm.get_usage_summary()["total_cost"] 
    \end{lstlisting}
\captionof{figure}{GSM8K workflow }
\vspace{1cm}

\captionsetup{type=figure}
\begin{lstlisting}
class Workflow:
    def __init__(
        self,
        name,
        llm_config,
        dataset: DatasetType,
    ) -> None:
        self.name = name
        self.dataset = dataset
        self.llm = create_llm_instance(llm_config)

    async def __call__(self, problem: str) -> tuple[str, float]:
        """
        Implements a multi-step workflow for HotpotQA with reasoning verification.
        Returns:
        - answer: str - The final answer only, without reasoning or explanation
        - cost: float - Total cost of LLM API calls
        """
        # Step 1: Initial analysis to identify key information and question type
        analysis_prompt = f"Analyze this multi-hop question and identify the key entities and relationships needed: {problem}"
        analysis = await self.llm.call_llm(
            messages=[analysis_prompt],
            temperature=0.3,
            num_of_response=1,
            agent_role="analytical assistant",
            instructions="Identify the key entities, relationships, and the type of information needed to answer this question."
        )

        # Step 2: Extract and validate supporting facts from Wikipedia context
        facts_prompt = f"Question: {problem}\nAnalysis: {analysis[0]}\nIdentify the specific Wikipedia sentences that contain the essential information needed to answer this question. For each fact, explain why it's necessary for the multi-hop reasoning chain."
        supporting_facts = await self.llm.call_llm(
            messages=[facts_prompt],
            temperature=0.4,
            num_of_response=3,
            agent_role="evidence finder",
            instructions="Extract only the most relevant sentences from the Wikipedia context. Ensure each fact is necessary for bridging the reasoning steps. Identify any missing information."
        )

        # Step 3: Generate multiple reasoning paths using validated supporting facts
        reasoning_paths = []
        for facts in supporting_facts:
            reasoning_prompt = f"Question: {problem}\nRelevant Facts: {facts}\nProvide a step-by-step reasoning path that connects these supporting facts to answer the question."
            paths = await self.llm.call_llm(
                messages=[reasoning_prompt],
                temperature=0.7,
                num_of_response=2,
                agent_role="logical reasoner",
                instructions="Create a clear reasoning chain that shows how the supporting facts connect to reach the answer. Each step should be grounded in the provided facts."
            )
            reasoning_paths.extend(paths)

        # Step 4: Generate candidate answers based on each reasoning path
        answers = []
        for path in reasoning_paths:
            answer_prompt = f"Question: {problem}\nReasoning: {path}\nProvide only the final answer in its simplest form without explanation."
            candidate = await self.llm.call_llm(
                messages=[answer_prompt],
                temperature=0.2,
                num_of_response=1,
                agent_role="precise answerer",
                instructions="Give only the exact answer in its simplest form - just the name, number, or fact requested. No explanations."
            )
            answers.extend(candidate)

        # Step 5: Verify and select the final answer
        verification_prompt = f"Question: {problem}\nSupporting Facts: {supporting_facts}\nCandidate answers: {answers}\nSelect the most accurate answer that is fully supported by the facts and simplify it to its most basic form."
        final_answer = await self.llm.call_llm(
            messages=[verification_prompt],
            temperature=0.1,
            num_of_response=1,
            agent_role="answer validator",
            instructions="Choose and simplify the answer to its most basic form, ensuring it is fully supported by the extracted Wikipedia facts. For names, give just the name. For yes/no questions, respond only with Yes or No. For measurements, give just the number and unit. Remove any explanations or additional context."
        )

        # Return the simplified final answer and total cost
        return final_answer[0], self.llm.get_usage_summary()["total_cost"] 
    \end{lstlisting}
\captionof{figure}{HotpotQA workflow }
\vspace{1cm}

\captionsetup{type=figure}
\begin{lstlisting}
class Workflow:
    def __init__(
        self,
        name,
        llm_config,
        dataset: DatasetType,
    ) -> None:
        self.name = name
        self.dataset = dataset
        self.llm = create_llm_instance(llm_config)

    async def __call__(self, problem: str) -> tuple[str, float]:
        """
        Implements a workflow for solving MATH problems with multiple verification steps.
        Returns:
        - answer: str - Final answer in LaTeX boxed format
        - cost: float - Total cost of LLM API calls
        """
        # Step 1: Plan solution approach and identify key concepts
        plan_instructions = """
        Analyze this math problem and create a solution plan:
        1. Identify the mathematical concepts and theorems needed
        2. Break down the problem into clear logical steps
        3. Note any potential pitfalls or edge cases to consider
        4. Outline the sequence of calculations needed
        Do not solve the problem yet - focus on planning the approach.
        """
        solution_plans = await self.llm.call_llm(
            messages=[problem],
            temperature=0.4,
            num_of_response=2,
            agent_role="mathematical strategist",
            instructions=plan_instructions
        )

        # Step 2: Execute solution with detailed reasoning using the plans
        solve_instructions = f"""
        Solve this math problem following the solution plans provided:
        Plan 1: {solution_plans[0]}
        Plan 2: {solution_plans[1]}
        
        Show your work step by step, following these guidelines:
        1. Use the identified concepts and theorems correctly
        2. Follow the planned logical steps
        3. Watch for the noted pitfalls
        4. Double-check calculations
        5. Format the final answer using LaTeX boxed{{}}
        """
        solutions = await self.llm.call_llm(
            messages=[f"Problem: {problem}"],
            temperature=0.3,
            num_of_response=2,
            agent_role="expert mathematician",
            instructions=solve_instructions
        )

        # Step 3: Verify and validate solutions
        verify_instructions = """
        Verify these solutions for correctness:
        1. Check mathematical logic and steps
        2. Verify calculations
        3. Ensure answer format is correct (LaTeX boxed{})
        4. Compare the two solutions
        Return only the correct boxed answer. If solutions differ, explain why one is correct.
        """
        verification = await self.llm.call_llm(
            messages=[f"Problem: {problem}\nSolution 1: {solutions[0]}\nSolution 2: {solutions[1]}"],
            temperature=0.2,
            num_of_response=1,
            agent_role="mathematical reviewer",
            instructions=verify_instructions
        )

        # Step 4: Format check and extraction
        format_instructions = """
        Extract and format the final answer following these rules:
        1. Answer must be in LaTeX boxed{} format
        2. For fractions, use \frac{n}{d}
        3. For multiple values, separate with commas inside boxed{}
        4. For vectors/matrices, use proper LaTeX notation
        5. Remove any explanation text, keep only the boxed{} answer
        Return only the formatted answer.
        """
        formatted_answer = await self.llm.call_llm(
            messages=[verification[0]],
            temperature=0.1,
            num_of_response=1,
            agent_role="LaTeX formatter",
            instructions=format_instructions
        )

        # Extract just the boxed answer
        answer = formatted_answer[0]
        if "\\boxed{" not in answer:
            # Final fallback to ensure boxed format
            answer = f"\\boxed{{{answer}}}"

        return answer, self.llm.get_usage_summary()["total_cost"] 
    \end{lstlisting}
\captionof{figure}{MATH workflow }
\vspace{1cm}

\captionsetup{type=figure}
\begin{lstlisting}
class Workflow:
    def __init__(
        self,
        name,
        llm_config,
        dataset: DatasetType,
    ) -> None:
        self.name = name
        self.dataset = dataset
        self.llm = create_llm_instance(llm_config)

    async def __call__(self, problem: str) -> tuple[str, float]:
        """
        Implements a workflow for MMLU multiple choice questions.
        Returns:
        - answer: str - A single capital letter (A-J) representing the chosen option
        - cost: float - Total cost of LLM API calls
        """
        # Step 1: Structured decomposition of the question
        decomposition_prompt = f"""
        Break down this multiple choice question into its core components:
        {problem}

        1. Core Concept: What is the main topic or principle being tested?
        2. Given Information: What key facts or conditions are provided?
        3. Required Reasoning: What logical steps are needed to connect the given information to the answer?
        4. Relationships: How do different elements in the question relate to each other?
        5. Answer Criteria: What specific characteristics must the correct answer satisfy?

        Provide a structured analysis addressing each point above.
        """
        decomposition_responses = await self.llm.call_llm(
            messages=[decomposition_prompt],
            temperature=0.2,
            num_of_response=2,
            agent_role="analytical expert",
            instructions="Break down the question systematically, focusing on logical relationships and key components."
        )

        # Step 2: Detailed analysis incorporating decomposition insights
        analysis_prompt = f"""
        Using this structural analysis:
        {decomposition_responses[0]}
        {decomposition_responses[1]}

        Evaluate the multiple choice question:
        {problem}

        For each option (A-J):
        1. How well does it align with the core concept identified?
        2. Does it satisfy the answer criteria established?
        3. Is it supported by the logical relationships we identified?

        After analysis, select the most appropriate answer choice (A-J).
        """
        analysis_responses = await self.llm.call_llm(
            messages=[analysis_prompt],
            temperature=0.3,
            num_of_response=2,
            agent_role="expert academic advisor",
            instructions="Use the structural analysis to systematically evaluate each option and clearly indicate your final answer as a single capital letter A-J."
        )

        # Step 3: Cross-check against decomposition framework
        verification_prompt = f"""
        Question: {problem}

        Consider how each answer option aligns with our structural analysis:
        {decomposition_responses[0]}

        Previous answer suggestions: {', '.join(analysis_responses)}

        Verify the answer by:
        1. Testing it against each component of our decomposition
        2. Checking for logical consistency with identified relationships
        3. Confirming it meets all answer criteria

        Which option (A-J) best satisfies these requirements?
        """
        verification_responses = await self.llm.call_llm(
            messages=[verification_prompt],
            temperature=0.2,
            num_of_response=1,
            agent_role="subject matter expert",
            instructions="Systematically verify the answer against our structural analysis framework."
        )

        # Step 4: Final consensus with strict constraints
        all_responses = analysis_responses + [verification_responses[0]]
        consensus_prompt = f"""
        Question: {problem}

        Previous analyses suggested these answers: {', '.join(all_responses)}

        Based on our complete structural analysis and verification, determine the final answer.
        You must output EXACTLY one capital letter A-J, nothing else.
        """
        
        final_response = await self.llm.call_llm(
            messages=[consensus_prompt],
            temperature=0.1,
            num_of_response=1,
            agent_role="decision maker",
            instructions="Output only a single capital letter A-J as the final answer, with no additional text."
        )

        # Step 5: Extract and validate the answer
        answer = re.findall(r'[A-J]', final_response[0])[0]
        
        return answer, self.llm.get_usage_summary()["total_cost"] 
    \end{lstlisting}
\captionof{figure}{MMLU-Pro workflow}

\twocolumn

\end{document}